%TODO: update the figure with 30 runs

\documentclass[11pt]{article}

\usepackage{times}
\usepackage{fullpage}

% For figures
\usepackage{graphicx}
\usepackage{subfig}

% For citations
%\usepackage[round,semicolon,authoryear]{natbib}

% For algorithms
\usepackage{algorithm}
\usepackage{algorithmic}

%\usepackage{hyperref}

% For convenience
\usepackage{pdfsync}
\usepackage{verbatim}

% Packages hyperref and algorithmic misbehave sometimes.  We can fix
% this with the following command.

% additional packages
\usepackage{amsmath}
\usepackage{amsthm}
\usepackage{microtype}

\newtheorem{lemma}{Lemma}
\newtheorem{theorem}{Theorem}

\newtheorem{definition}{Definition}

\newcommand{\littleheader}[1]{\textbf{#1:}}

\newcommand{\R}{\mathbf{R}}

\newcommand{\cost}{\mathrm{cost}}

\newcommand{\NP}{\textbf{NP}}
\newcommand{\A}{\mathcal{A}}

\newcommand{\x}{\mathbf{x}}

\newcommand{\D}{D}

\title{Distributed $k$-Means and $k$-Median Clustering on General Topologies}

\author{Maria Florina Balcan\thanks{
Georgia Institute of Technology,
\texttt{\small ninamf@cc.gatech.edu}}
\qquad
Steven Ehrlich\thanks{
Georgia Institute of Technology,
\texttt{\small sehrlich@cc.gatech.edu}}
\qquad
Yingyu Liang\thanks{
Georgia Institute of Technology,
\texttt{\small yliang39@gatech.edu}}
}

\date{}

\begin{document}

\maketitle

\begin{abstract}
This paper provides new algorithms for distributed clustering for two popular center-based objectives, $k$-median and $k$-means.
These algorithms have provable guarantees and improve communication complexity over existing approaches.
Following a classic approach in clustering by \cite{har2004coresets}, we reduce the problem of finding a clustering with low cost
to the problem of finding a coreset of small size.
We provide a distributed method for constructing a global coreset which improves over the previous methods by reducing the communication complexity,
and which works over general communication topologies.
Experimental results on large scale data sets show that this approach outperforms other coreset-based distributed clustering algorithms.
\end{abstract}

\section{Introduction}

Most classic clustering algorithms are designed for the centralized setting, but in recent years data has become distributed over different locations,
such as distributed databases~\cite{olston2003adaptive,corbett2012spanner}, images and videos over networks~\cite{mitra2011characterizing}, surveillance~\cite{greenhill2007distributed}
and sensor networks~\cite{considine2004approximate,greenwald2004power}.
In many of these applications the data is inherently distributed because, as in sensor networks, it is collected at different sites. As a consequence it has become crucial to develop clustering algorithms which are effective in the distributed setting.

Several algorithms for distributed clustering have been
proposed and empirically tested. %~\cite{forman2000distributed,datta2005k,tasoulis2004unsupervised,januzaj2003towards}.
Some of these algorithms~\cite{forman2000distributed,tasoulis2004unsupervised,datta2005k} are direct adaptations of centralized algorithms which rely on statistics that are easy to compute in a distributed manner.
Other algorithms~\cite{januzaj2003towards,kargupta2001distributed} generate summaries of local data and transmit
them to a central coordinator which then performs the clustering algorithm.
No theoretical guarantees are provided for the clustering quality in these algorithms,
and they do not try to minimize the communication cost.
Additionally, most of these algorithms assume that the distributed nodes can communicate with all other sites or that there is a central coordinator that communicates with all other sites.

In this paper, we study the problem of distributed clustering
where the data is distributed across nodes
whose communication is restricted to the edges of an arbitrary graph.
We provide algorithms with small communication cost and provable guarantees on the clustering quality.
Our technique for reducing communication in general graphs is based on the construction of a small set of points which act as a proxy for the entire data set.

An \emph{$\epsilon$-coreset} is a weighted set of points whose cost on any set of centers is approximately the cost of the original data on those same centers
up to accuracy $\epsilon$.
Thus an approximate solution for the coreset is also an approximate solution for the original data.
Coresets have previously been studied in the centralized setting (\cite{har2004coresets,feldman2011unified}) but have also recently been used for distributed clustering as in \cite{zhang2008approximate} and as implied by \cite{feldman2012effective}.
In this work, we propose a distributed algorithm for $k$-means and $k$-median,
by which each node constructs a local portion of a global coreset.
Communicating the approximate cost of a global solution to each node is enough for the local construction, leading to low communication cost overall.
The nodes then share the local portions of the coreset, which can be done efficiently in general graphs using a message passing approach.

More precisely, in Section \ref{sec:coreset}, we propose a distributed coreset construction algorithm based on local approximate solutions.
Each node computes an approximate solution for its local data,
and then constructs the local portion of a coreset using only its local data and the total cost of each node's approximation.
For $\epsilon$ constant, this builds a coreset of size $\tilde{O}(kd +nk)$ for $k$-median and $k$-means when the data lies in $d$ dimensions and is distributed over $n$ sites
~\footnote{For $k$-median and $k$-means in general metric spaces,
the bound on the size of the coreset can be obtained by replacing $d$ with the logarithm of the total number of points.
The analysis for general metric spaces is largely the same as that for $d$ dimensional Euclidean space,
so we will focus on Euclidean space and point out the difference when needed.}.
%footnote{We need to sample $O(kd)$ points and combine them with $n$ sets of $k$ centers in the local solutions, resulting in a coreset of size $O(kd+nk)$. In the centralized setting, we only need to combine the samples points with one set of $k$ centers, resulting in a coreset of size $O(kd)$.}
If there is a central coordinator among the $n$ sites, then clustering can be performed on the coordinator by collecting the local portions of the coreset
with a communication cost equal to the coreset size $\tilde{O}(kd +nk)$.
For distributed clustering over general connected topologies, we propose an algorithm based on the distributed coreset construction
and a message-passing approach, whose communication cost improves over previous coreset-based algorithms.
%The size of the constructed coreset is independent of the communication network topology. %, and even when the topology is unknown we ...
%This improves over \cite{zhang2008approximate} which builds coresets whose size depends on the height of a rooted tree.
%\footnote{Their result depended on older coreset constructions, with larger size. Throughout this paper, when we compare to \cite{zhang2008approximate} we assume they use the coreset construction technique of \cite{feldman2011unified} to reduce their coreset size and communication cost.}
We provide a detailed comparison below.

Experimental results on large scale data sets show that our algorithm performs well in practice. For a fixed amount of communication, our algorithm %significantly
outperforms other coreset construction algorithms.%on both synthetic and real data sets.% I hope...% FIXORAMA

\smallskip
\noindent
\littleheader{Comparison to Other Coreset Algorithms}
Since coresets summarize local information they are a natural tool to use when trying to reduce communication complexity. If each node constructs an $\epsilon$-coreset on its local data, then the union of these coresets is clearly an $\epsilon$-coreset for the entire data set.  Unfortunately the size of the coreset in this approach increases greatly with the number of nodes.

Another approach is the one presented in~\cite{zhang2008approximate}.
Its main idea is to approximate the union of local coresets with another coreset. % \cite{zhang2008approximate} assume that the nodes communicate in a rooted tree. Each node builds a coreset for the set of points both in its local data and the local data of all descendents in the rooted tree. This is done using each childs coreset along with their local data.
They assume nodes communicate over a rooted tree, with each node passing its coreset to its parent.
Because the approximation factor of the constructed coreset depends on the quality of its component coresets, the accuracy a coreset needs (and thus the overall communication complexity) scales with the height of this tree.
Although it is possible to find a spanning tree in any communication network, when the graph has large diameter every tree has large height.
In particular many natural networks such as grid networks have a large diameter ($\Omega(\sqrt n)$ for grids) which greatly increases the size of coresets which must be communicated across the lower levels of the tree.
We show that it is possible to construct a global coreset with low communication overhead.
This is done by distributing the coreset construction procedure rather than combining local coresets.
The communication needed to construct this coreset is negligible -- just a single value from each data set representing the approximate cost of their local optimal clustering.
Since the sampled global $\epsilon$-coreset is the same size as any local $\epsilon$-coreset, this leads to an improvement of the communication cost over the other approaches.
See Figure~\ref{fig:coreset} for an illustration.
The constructed coreset is smaller by a factor of $n$ in general graphs, and is independent of the communication topology. This method excels in sparse networks with large diameters, where the previous approach in~\cite{zhang2008approximate} requires coresets that are quadratic in the size of the diameter for $k$-median and quartic for $k$-means; see Section~\ref{sec:clustering} for details.
\cite{feldman2012effective} also merge coresets using coreset construction, but they do so in a model of parallel computation and ignore communication costs.

Balcan et al.~\cite{balcan2012distributed} and Daume et al.~\cite{daume2012efficient} consider communication complexity questions arising when doing classification in distributed settings.
In concurrent and independent work, Kannan and Vempala~\cite{kannan2013nimble} study several optimization problems in distributed settings, including $k$-means clustering under an interesting separability assumption.

Section~\ref{sec:work} provides a review of additional related work.

\begin{figure}[!t]
\begin{center}
\centering
\subfloat[Zhang et al.\cite{zhang2008approximate} ]{\includegraphics[scale = 0.45]{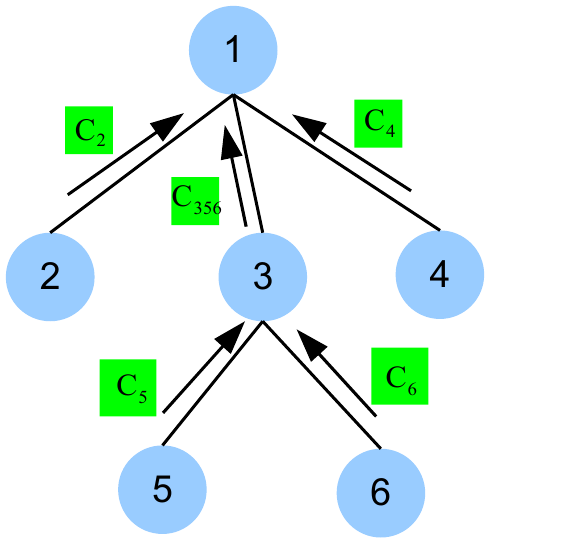}}
\hspace*{0.8in}
\subfloat[Our Construction ]{\includegraphics[scale = 0.45]{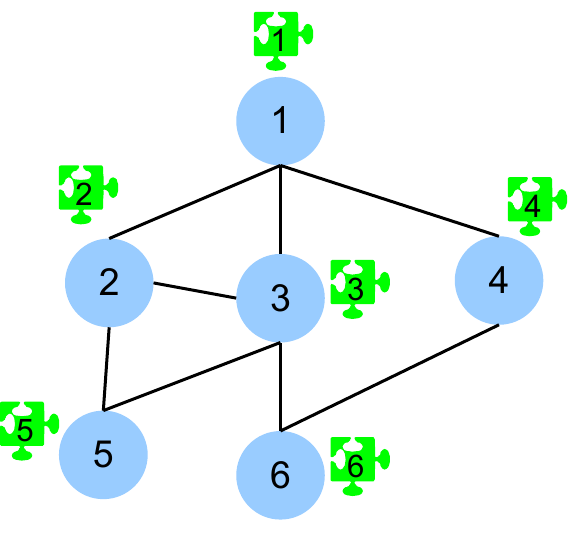}}
\end{center}
\caption{{
%\footnotesize
\textbf{(a) } Each node computes a coreset on the weighted pointset for its own data and its subtrees' coresets. %$C_i$ for its local data and the coresets of its children, and the coreset on the root is a global coreset. To prevent the accumulation of errors, nodes on lower levels construct more accurate coresets of larger size, leading to higher communication cost.
\textbf{(b) } Local constant approximation solutions are computed, and the costs of these solutions are used to coordinate the construction of a local portion on each node. %The union of the local portions is a global coreset whose size is equivalent to a coreset $C_i$ only on local data.
}}\label{fig:coreset}
\end{figure}

\section{Preliminaries}\label{sec:pre}

Let $d(p,q)$ denote the Euclidean distance between any two points $p, q \in \R^d$.
The goal of $k$-means clustering is to find a set of $k$ centers $\x=\{x_1, x_2, \dots, x_k\}$
which minimize the $k$-means cost of data set $P \subseteq \R^d$.
Here the $k$-means cost is defined as $\cost(P,\x)=\sum_{p \in P} d(p, \x)^2$ where $d(p, \x) = \min_{x \in \x} d(p, x)$.
If $P$ is a weighted data set with a weighting function $w$, then the $k$-means cost is defined as $\sum_{p \in P} w(p)d(p, \x)^2$.
Similarly, the $k$-median cost is defined as $\sum_{p \in P} d(p, \x)$.
Both $k$-means and $k$-median cost functions are known to be $\NP$-hard to minimize (see for example~\cite{ab:survey}).
%so we generally aim at approximation solutions.
For both objectives, there exist several readily available polynomial-time algorithms that achieve constant approximation solutions (see for example \cite{kanungo2002local,shi2013app}).

In the distributed clustering task, we consider a set of $n$ nodes $V=\{v_i, 1\leq i\leq n\}$ which communicate
on an undirected connected graph $G = (V,E)$ with $m=|E|$ edges.
More precisely, an edge $(v_i,v_j) \in E$ indicates that $v_i$ and $v_j$ can communicate with each other.
Here we measure the communication cost in number of points transmitted, and assume for simplicity that there is no latency in the communication.
On each node $v_i$, there is a local set of data points $P_i$, and the global data set is $P=\bigcup_{i=1}^n P_i$.
The goal is to find a set of $k$ centers $\x$ which optimize $\cost(P,\x)$
while keeping the computation efficient and the communication cost as low as possible.
Our focus is to reduce the total communication cost while preserving theoretical guarantees for approximating clustering cost.

\subsection{Coresets}

For the distributed clustering task, a natural approach to avoid broadcasting raw data is to generate a local summary of the relevant information.
If each site computes a summary for their own data set and then communicates this to a central coordinator, a solution can be computed from a much smaller amount of data,
drastically reducing the communication.

In the centralized setting, the idea of summarization with respect to the clustering task
is captured by the concept of coresets~\cite{har2004coresets,feldman2011unified}.
A coreset is a set of points, together with a weight for each point,
such that the cost of this weighted set
approximates the cost of the original data for any set of $k$ centers.
The formal definition of coresets is:

\begin{definition} [\textbf{coreset}] \label{def:coreset}
An $\epsilon$-coreset for a set of points $P$ with respect to a center-based cost function is a set of points $S$ and
a set of weights $w: S \rightarrow \R$
such that for any set of centers $\x$, %the weighted cost of $S$ using $w$ approximates the cost of $P$ up to accuracy $\epsilon$.
$$ (1-\epsilon)\cost(P,\x) \leq \sum_{p\in S}w(p)\cost(p,\x) \leq (1+\epsilon)\cost(P,\x).$$
\end{definition}

In the centralized setting, many coreset construction algorithms have been proposed for $k$-median, $k$-means
and some other cost functions.
For example,  for points in $\R^d$, algorithms in~\cite{feldman2011unified} construct coresets of size $t = \tilde{O}(kd/\epsilon^4)$ for $k$-means
and coresets of size $t = \tilde{O}(kd/\epsilon^2)$ for $k$-median.
In the distributed setting, it is natural to ask whether there exists an algorithm that constructs a small coreset for the entire point set but still has low communication cost.
Note that the union of coresets for multiple data sets is a coreset for the union of the data sets.
The immediate construction of combining the local coresets from each node would produce a global coreset whose size was larger by a factor of $n$, greatly increasing the communication complexity.
We present a distributed algorithm which constructs a global coreset the same size as the centralized construction and only needs a single value\footnote{The value that is communicated is the sum of the costs of approximations to the local optimal clustering. This is guaranteed to be no more than a constant factor times larger than the optimal cost.} communicated to each node.
This serves as the basis for our distributed clustering algorithm.

\section{Distributed Coreset Construction}\label{sec:coreset}

In this section, we design a distributed coreset construction algorithm for $k$-means and $k$-median.
Note that the underlying technique can be extended to other additive clustering objectives such as $k$-line median.
%and sum of distances between points and their centers to the power of $z$.

To gain some intuition on the distributed coreset construction algorithm,
we briefly review the coreset construction algorithm in~\cite{feldman2011unified}
in the centralized setting.
The coreset is constructed by computing a constant approximation solution for the entire data set, and then sampling points proportional to their contributions to the cost of this solution.
Intuitively, the points close to the nearest centers can be approximately represented by the nearest centers
while points far away cannot be well represented.
Thus, points should be sampled with probability proportional to their contributions to the cost.

Directly adapting the algorithm to the distributed setting would require computing a constant approximation solution for the entire data set. % with low communication.
We show that a global coreset can be constructed in a distributed fashion by estimating the weight of the entire data set with the sum of local approximations.
%With this approach, it suffices for nodes to communicate the total costs of their local solutions.
%At the end of this approach, the constructed coreset is distributed over the nodes, so we call it distributed coreset.
We first compute a local approximation solution for each local data set, and communicate the total costs of these local solutions.
Then we sample points proportional to their contributions to the cost of their local solutions.
At the end of the algorithm, the coreset consists of the sampled points and the centers in the local solutions.
The coreset points are distributed over the nodes, so we call it distributed coreset.
See Algorithm~\ref{alg:con_kmedian} for details.

\newcommand{\rw}{\qquad\ \ \ \ \ \quad}
\newcommand{\bw}{\quad\qquad}
\newcommand{\sbw}{\qquad\ \ \ \ \quad\qquad}
\newcommand{\tw}{\qquad}
\newcommand{\bulletspace}{\ \ \ }
\newcommand{\mybullet}{\bullet}
\begin{algorithm}[thbp]
\caption{Communication aware distributed coreset construction}
\label{alg:con_kmedian}
\begin{algorithmic}
\STATE{\textbf{Input:} Local datasets $\{P_i, 1\leq i \leq n\}$, parameter $t$ (number of points to be sampled).}
\STATE
\STATE{\tw \textbf{Round 1:} on each node $v_i \in V$ }
\STATE{\bw $\mybullet$ Compute a constant approximation $B_i$ for $P_i$. \\
\bw \bulletspace Communicate $\cost(P_i,B_i)$ to all other nodes.}
\STATE{\tw \textbf{Round 2:} on each node $v_i \in V$}
\STATE{ \bw $\mybullet$ Set $t_i = \frac{t \ \cost(P_i, B_i) }{\sum_{j=1}^n\cost(P_j,B_j)}$ and
$m_p = \cost(p,B_i), \forall p\in P_i$.}
\STATE{ \bw $\mybullet$ Pick a non-uniform random sample $S_i$ of $t_i$ points from $P_i$,\\
\bw \bulletspace where for every $q \in S_i$ and $p \in P_i$, we have $q = p$ with
probability $ m_p /\sum_{z\in P_i} m_z$.\\
\bw \bulletspace Let $w_q =\frac{\sum_i\sum_{z\in P_i} m_z}{t m_q}$ for each $q \in S_i$.}
\STATE{\bw $\mybullet$ For $\forall b \in B_i$, let $P_{b}= \{p \in P_i: d(p,b) = d(p,B_i)\}$,
$w_b=|P_{b}| - \sum_{q\in P_{b} \cap S}w_q.$}
\STATE
\STATE{\textbf{Output:} Distributed coreset: points $S_i \cup B_i$ with weights $\{w_q: q \in S_i\cup B_i\}$, $1\leq i\leq n$.}
\end{algorithmic}
\end{algorithm}

\begin{theorem}\label{thm:coreset}
For distributed $k$-means and $k$-median clustering on a graph, there exists an algorithm such that with probability at least $1-\delta$,
the union of its output on all nodes is an $\epsilon$-coreset for $P=\bigcup_{i=1}^n P_i$. The size of the coreset is
$O(\frac{1}{\epsilon^4}(kd \log(kd) + \log\frac{1}{\delta}) + nk\log\frac{nk}{\delta})$ for $k$-means, and
$O(\frac{1}{\epsilon^2}(kd\log(kd) + \log\frac{1}{\delta}) + nk)$ for $k$-median.
The total communication cost is $O(mn)$.
\end{theorem}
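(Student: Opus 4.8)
The plan is to show that the weighted point set $S\cup B$, where $S=\bigcup_i S_i$ and $B=\bigcup_i B_i$, satisfies Definition~\ref{def:coreset} for every set of centers $\x$ simultaneously, and then read off the size and communication bounds from the construction. First I would fix an arbitrary $\x$ and rewrite the coreset estimate. For each point $p\in P_i$ let $b(p)\in B_i$ be its assigned center, and set $f_p(\x)=\cost(p,\x)-\cost(b(p),\x)$. Substituting the center weights $w_b=|P_b|-\sum_{q\in P_b\cap S}w_q$ into $\sum_{q\in S}w_q\cost(q,\x)+\sum_{b\in B}w_b\cost(b,\x)$, the terms involving $\cost(b,\x)$ collapse to $\sum_{b\in B}|P_b|\cost(b,\x)$, so the estimate equals $\sum_{b\in B}|P_b|\cost(b,\x)+\sum_{q\in S}w_qf_q(\x)$, while the true cost equals $\sum_{b\in B}|P_b|\cost(b,\x)+\sum_{p\in P}f_p(\x)$. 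Hence the coreset error for this $\x$ is exactly $\sum_{q\in S}w_qf_q(\x)-\sum_{p\in P}f_p(\x)$, and a short calculation using $t_i/t=\cost(P_i,B_i)/\sum_j\cost(P_j,B_j)$ and $w_q=(\sum_{z}m_z)/(t\,m_q)$ shows the sampled sum is an \emph{unbiased} estimator of $\sum_{p\in P}f_p(\x)$; the center weights are engineered precisely to make the proxy part cancel.

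Next I would bound the deviation for this fixed $\x$. Writing $M=\sum_{p\in P}m_p$, each of the $t$ sampled terms is $w_qf_q(\x)$ with $w_q=M/(t\,m_q)$. For $k$-median the triangle inequality gives $|f_p(\x)|\le d(p,b(p))=\cost(p,B_i)=m_p/2$, so $|w_qf_q(\x)|\le M/(2t)$ \emph{uniformly in} $\x$; Hoeffding's inequality then bounds the error by $\epsilon\,\OPT$ with failure probability $\exp(-\Omega(\epsilon^2 t))$, and since $B_i$ is a constant-factor approximation one has $M=O(\OPT)$ and $\OPT\le\cost(P,\x)$, turning the additive guarantee into the required relative one. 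For $k$-means the obstacle is that $|f_p(\x)|=|d(p,\x)^2-d(b(p),\x)^2|$ no longer has an $\x$-free bound; here I would use the weak triangle inequality for squared distances together with a Bernstein-type bound controlled by the variance, which is the source of the degraded $1/\epsilon^4$ dependence rather than $1/\epsilon^2$.

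The main obstacle is passing from a single $\x$ to \emph{all} $\x\in(\R^d)^k$ at once, since this set is infinite. I would handle it with the standard coreset uniform-convergence machinery: the family of functions $\x\mapsto\cost(p,\x)$ induced by $k$ centers in $\R^d$ has pseudo-dimension $O(kd)$, so a covering/$\epsilon$-net argument combined with the single-query concentration upgrades the guarantee to hold simultaneously over all $\x$ once the sample size satisfies $t=O(\frac{1}{\epsilon^2}(kd+\log\frac1\delta))$ for $k$-median and $t=O(\frac1{\epsilon^4}(kd+\log\frac1\delta))$ for $k$-means; the extra $\log\frac{nk}\delta$ factor in the $k$-means center term comes from the union bound over the $O(nk)$ local centers needed in the variance analysis.

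Finally the size and communication bounds are immediate. The output is the $t$ sampled points together with all local centers $\bigcup_iB_i$, and since $|B_i|\le k$ this gives total size $t+nk$, matching the stated bounds. For communication, the only messages sent are the $n$ scalars $\cost(P_i,B_i)$ in Round~1; flooding one scalar across the connected graph $G$ traverses each of the $m$ edges a constant number of times, so broadcasting all $n$ of them to every node costs $O(mn)$, while the coreset itself remains stored distributively and is never transmitted during construction.
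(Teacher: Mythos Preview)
Your $k$-median argument is essentially the paper's: the same cancellation via the center weights, the same pointwise bound $|f_p(\x)|\le d(p,b(p))=m_p/2$, and the same uniform-convergence step over the $O(kd)$-dimensional function class. The paper packages Hoeffding plus the pseudo-dimension net into a single sampling lemma (Lemma~\ref{lem:sampling}), but the content is identical.

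The $k$-means part, however, has a real gap. You write that you would ``use the weak triangle inequality for squared distances together with a Bernstein-type bound controlled by the variance,'' but this does not go through: each sampled summand $w_q f_q(\x)=\frac{M}{t\,m_q}\bigl(d(q,\x)^2-d(b_q,\x)^2\bigr)$ is \emph{unbounded} in $\x$ (and in $q$, since $m_q$ can be arbitrarily small while $|f_q(\x)|$ only scales like $\sqrt{m_q}\cdot d(q,\x)$), so Bernstein's inequality is not applicable without truncation, and even a good variance bound cannot compensate for an unbounded range term. The paper's fix is a structural one that your sketch does not contain: split $P$ (for each $\x$) into \emph{good} points, where $|d(p,\x)^2-d(b_p,\x)^2|\le d(p,b_p)^2/\epsilon$, and \emph{bad} points. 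On good points one now has the $\x$-free envelope $f_\x(p)\le m_p/\epsilon$, so Lemma~\ref{lem:sampling} applies at accuracy $\epsilon^2$, which is precisely what produces the $1/\epsilon^4$. On bad points a direct calculation (Lemma~\ref{lem:techBound}) shows $|d(p,\x)^2-d(b_p,\x)^2|\le O(\epsilon)\min\{d(p,\x)^2,d(b_p,\x)^2\}$, so their contribution is already $O(\epsilon)\cost(P,\x)$ deterministically on the $P$ side.

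Relatedly, your explanation of the $nk\log\frac{nk}{\delta}$ term is off. It is not a byproduct of a variance analysis; it is the sample size needed so that, for every one of the $|B|\le nk$ local centers $b$, the random weight $\sum_{q\in P_b\cap S}w_q$ concentrates around $|P_b|$ (Lemma~\ref{lem:weight}). This concentration is what lets you bound the \emph{sampled} bad-point term $\sum_{q\in S\setminus G(\x)} w_q\,O(\epsilon)\,d(b_q,\x)^2$ by $O(\epsilon)\sum_{b}|P_b|\,d(b,\x)^2=O(\epsilon)\cost(P,\x)$. Without the good/bad decomposition this step never arises, so your proposal cannot account for where that term actually enters.
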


As described below, the distributed coreset construction can be achieved by using Algorithm~\ref{alg:con_kmedian} with appropriate $t$, namely $O(\frac{1}{\epsilon^4}(kd\log(kd) + \log\frac{1}{\delta}) + nk\log\frac{nk}{\delta})$ for $k$-means and $O(\frac{1}{\epsilon^2}(kd\log(kd) + \log\frac{1}{\delta}))$ for $k$-median.
The formal proofs are described in the following subsections.

\subsection{Proof of Theorem~\ref{thm:coreset}: $k$-median}
\label{sec:analysis_coreset}

The analysis relies on the definition of the dimension of a function space and a sampling lemma.

\begin{definition}[\cite{feldman2011unified}]
Let $F$ be a finite set of functions from a set $P$ to $\R_{\geq 0}$.
For $f\in F$, let $B(f,r) = \{p: f(p) \leq r\}$.
The dimension of the function space $\dim(F, P)$ is the smallest integer $d$ such that for any $G \subseteq P$,
$\bigl|\{G \cap B(f,r): f \in F, r\geq 0\}\bigr|\leq |G|^d.$
\end{definition}

Suppose we draw a sample $S$ according to $\{m_p: p\in P\}$,
namely for every $q \in S$ and every $p \in P$, we have $q=p$ with probability $\frac{m_p}{\sum_{z \in P}m_z}$.
Set the weights of the points as $w_p=\frac {\sum_{z\in P}m_z}{m_p|S|}$ for $p \in P$.
Then for any $f\in F$, the expectation of the weighted cost of $S$ equals the cost of the original data $P$:
\begin{eqnarray*}
\mathbf{E}\left[\sum_{q\in S} w_qf(q) \right] & = & \sum_{q\in S} \mathbf{E}[w_q f(q)] = \sum_{q\in S} \sum_{p\in P}\Pr[q=p] w_p f(p) \\
& = &\sum_{q\in S}\sum_{p \in P}\frac{m_p}{\sum_{z \in P}m_z}\frac {\sum_{z\in P}m_z}{m_p|S|}f(p)
= \sum_{q\in S}\sum_{p\in P}\frac{1}{|S|}f(p) = \sum_{p\in P}f(p).
\end{eqnarray*}

The following lemma shows that if the sample size is large enough, then we also have concentration for any $f \in F$.
The lemma is implicit in~\cite{feldman2011unified} and we include the proof in the appendix for completeness.

\begin{lemma}\label{lem:sampling}
Fix a set $F$ of functions $f:P \to \R_{\geq 0}$.
Let $S$ be a sample drawn i.i.d. from $P$ according to $\{m_p: p\in P\}$, namely, for every $q \in S$ and every $p \in P$, we have $q=p$ with probability $\frac{m_p}{\sum_{z \in P}m_z}$.
Let $w_p=\frac {\sum_{z\in P}m_z}{m_p|S|}$ for $p \in P$.
For a sufficiently large $c$,
if $|S| \geq \frac c {\epsilon^2} \left(\dim(F,P) \log \dim(F,P) +\log \frac 1\delta\right)$
then with probability at least $1-\delta, \forall f \in F:$
$\left| \sum_{p\in P} f(p) - \sum_{q\in S} w_q f(q)\right| \leq \epsilon \left(\sum_{p\in P}m_p \right)\left(\max_{p\in P} \frac{f(p)}{m_p}\right).$
\end{lemma}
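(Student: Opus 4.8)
The plan is to recognize the claimed bound as a uniform convergence statement for a suitably normalized function class and to invoke a standard pseudo-dimension based deviation inequality. First I would rewrite the estimator. Writing $M = \sum_{z \in P} m_z$ and denoting the i.i.d.\ sample by $S = \{q_1, \dots, q_{|S|}\}$, each term of the weighted sum has the form $w_{q_j} f(q_j) = \frac{1}{|S|} Y_j$ with $Y_j := \frac{M\, f(q_j)}{m_{q_j}}$, so that $\sum_{q \in S} w_q f(q) = \frac{1}{|S|}\sum_{j} Y_j$ is an ordinary empirical average. The expectation computation preceding the lemma already gives $\mathbf{E}[Y_j] = \sum_{p \in P} f(p)$. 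Crucially, since $f \geq 0$, each $Y_j$ lies in $[0, M G_f]$ where $G_f := \max_{p \in P} \frac{f(p)}{m_p}$. Dividing through by $M G_f$, the functions $v_f(p) := \frac{f(p)}{m_p G_f} \in [0,1]$, and the target inequality is exactly equivalent to the uniform deviation bound $\sup_{f \in F}\bigl|\frac{1}{|S|}\sum_{j} v_f(q_j) - \mathbf{E}[v_f]\bigr| \leq \epsilon$.

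Second, I would apply a standard uniform convergence theorem for classes of $[0,1]$-valued functions whose complexity is controlled by their pseudo-dimension (of Haussler/Pollard type, as used in \cite{li2000improved,feldman2011unified}): if $|S| \geq \frac{c}{\epsilon^2}\bigl(\dim(\{v_f : f\in F\}, P) + \log\frac{1}{\delta}\bigr)$ for a suitable constant $c$, then with probability at least $1-\delta$ the empirical mean of every $v_f$ is within $\epsilon$ of its expectation. Multiplying the resulting inequality back through by $M G_f = \bigl(\sum_{p \in P} m_p\bigr)\bigl(\max_{p\in P} f(p)/m_p\bigr)$ reproduces the exact right-hand side of the lemma, and the simultaneous validity over all $f \in F$ is precisely the ``$\forall f \in F$'' quantifier in the statement. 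The expectation identity and the internal Chernoff-style concentration are routine once this normalization is in place.

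The step I expect to be the main obstacle is justifying that the sample size written in terms of $\dim(F,P)$ suffices, i.e.\ that $\dim(\{v_f\}, P) = O(\dim(F,P))$, since the reweighting alters the sublevel sets that define the dimension. The per-function rescaling by the positive scalar $G_f$ is harmless: it only reparametrizes the threshold, so $\{p : v_f(p) \leq r\} = \{p : f(p)/m_p \leq r G_f\}$ and the family of realizable sublevel sets is unchanged. The genuine issue is the per-point division by the fixed positive weights, which replaces a constant-threshold sublevel set $\{p : f(p) \leq r\}$ by $\{p : f(p) \leq r\,m_p\} = \{p : f(p) - r\,m_p \leq 0\}$. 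Here I would argue that for each fixed $f$, letting $r$ range still produces only a nested chain of at most $|G|+1$ distinct traces on any $G \subseteq P$, so the extra freedom introduced is a single one-parameter shift along the fixed function $p \mapsto m_p$; incorporating such a one-dimensional shift inflates the dimension of the range space by at most a constant factor, which is absorbed into $c$. Pinning down this dimension-preservation cleanly is the crux of the argument.
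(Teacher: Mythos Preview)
Your approach is correct but takes a different route from the paper. The paper's proof does not normalize to a $[0,1]$-valued class; instead it uses a \emph{multiset replication} trick: assuming without loss of generality $m_p\in\mathbb{N}^+$, it builds a set $G$ containing $m_p$ copies of each $p$ with function values $f(p_i):=f(p)/m_p$, observes that the non-uniform sample from $P$ is a uniform sample from $G$, and then invokes a black-box uniform-sampling result (Theorem 6.9 of \cite{feldman2011unified}) on $G$ with threshold $r=\max_{p'}f(p')$; multiplying through by $|G|=\sum_p m_p$ recovers the stated bound. Your importance-reweighting reduction reaches the same destination more directly, without the detour through integer multiplicities.

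What is worth noting is that the obstacle you single out---controlling $\dim(\{f/m\},P)$ in terms of $\dim(F,P)$---is \emph{not} actually avoided by the paper's argument. On the replicated multiset the relevant range space is $\{p_i:f(p)/m_p\le r\}$, and since all copies of a point carry the same value, the induced set system on any $H\subseteq G$ is the same as the set system of $\{p:f(p)/m_p\le r\}$ on the projection of $H$ to $P$. Hence $\dim(F,G)$ equals the very dimension you need. The paper simply asserts ``$\dim(F,G)=\dim(F,P)$ as pointed out by~\cite{feldman2011unified}'' and moves on. So your identification of this as the crux is accurate; the practical difference is only that the replication framing makes it natural to defer the fact to the cited source, whereas your framing obliges you to sketch the one-parameter-shift argument yourself. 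Your sketch (that adding the family $\{r\,m_p:r\ge0\}$ inflates the pseudo-dimension by at most a constant) is the right shape and can be made rigorous via standard bounds on the pseudo-dimension of sums of classes.
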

To get a small bound on the difference between $\sum_{p\in P} f(p)$ and $\sum_{q\in S} w_q f(q)$, we need to choose $m_p$ such that $\max_{p\in P} \frac{f(p)}{m_p}$ is bounded. More precisely, if we choose $m_p=\max_{f\in F} f(p)$, then the difference is bounded by $\epsilon \sum_{p\in P}m_p $.

%To apply the lemma, we note that the bound provided depends on the term $\sum_{p\in P}m_p$.
%Ideally, we want $\sum_{p\in P}m_p$ to be small.
%The key now becomes to design functions $f(p)$ with suitable upper bounds for our problems.
%Since we want the bound to hold for all $\x \in \mathbf{X}$,
%we can construct one function $f_\x$ for each $\x$.
%Then we should choose $f_\x(p)$ such that its value does not vary much with $\x$.

We first consider the centralized setting and
review how \cite{feldman2011unified} applied the lemma to construct a coreset for $k$-median as in Definition~\ref{def:coreset}.
A natural approach is to apply this lemma directly to the cost, namely, to choose $f_\x(p):=\cost(p,\x)$.
The problem is that a suitable upper bound $m_p$ is not available for $\cost(p,\x)$.
However, we can still apply the lemma to a different set of functions defined as follows.
Let $b_p$ denote the closest center to $p$ in the approximation solution. %then $\cost(b_p,\x)$ will be close to $\cost(p,\x)$.
Aiming to approximate the error $\sum_p[\cost(p,\x) - \cost(b_p,\x)]$ rather than to approximate $\sum_p\cost(p,\x)$ directly, we define $f_\x(p):= \cost(p,\x)-\cost(b_p,\x)+\cost(p,b_p)$, where $\cost(p,b_p)$ is added so that $f_\x(p) \geq 0$.
Since $0\leq f_\x(p) \leq 2 \cost(p,b_p)$, we can apply the lemma to $f_\x(p)$ and $m_p = 2\cost(p,b_p)$.
The lemma then bounds the difference $|\sum_{p\in P} f_\x(p) - \sum_{q \in S} w_q f_\x(q)|$ by $2\epsilon \sum_{p \in P}\cost(p,b_p)$,
so we have an $O(\epsilon)$-approximation.

Note that $\sum_{p\in P} f_\x(p) - \sum_{q \in S} w_q f_\x(q)$ does not equal $\sum_{p\in P} \cost(p,\x) - \sum_{q \in S} w_q \cost(q,\x)$.
However, it equals the difference between $\sum_{p\in P} \cost(p,\x) $ and a weighted cost of the sampled points and the centers in the approximation solution.
To get a coreset as in Definition~\ref{def:coreset},
we need to add the centers of the approximation solution with specific weights to the coreset.
Then when the sample is sufficiently large, the union of the sampled points and the centers is an $\epsilon$-coreset.

%If we choose such $f_\x(p)$ and $m_p$, the left hand side of the bound in the sampling lemma is not exactly what we need in Definition~\ref{def:coreset},
%since $f_\x(p)$ is not the cost of $p$ on the set of centers $\x$.
%But if we plug in $f_\x(p)= \cost(p,\x)-\cost(b_p,\x)+\cost(p,b_p)$,
%we can see that the left hand side is the difference between the cost of the original data and a weighted cost of the sampled points and the centers in the approximation solutions, which then
%naturally leads to the weighting scheme used in Algorithm~\ref{alg:con_kmedian}:
%\begin{eqnarray*}
% \sum_{p\in P} f_\x(p) - \sum_{q \in S} w_q f_\x(q)
%& = & \sum_{p\in P} \cost(p,\x) - \sum_{q \in S} w_q \cost(q,\x) - \sum_{b \in B} w_b \cost(b, \x)
%\end{eqnarray*}
%\footnote{
%Note that we choose $b_p$ from the approximation solution for the local dataset containing $p$,
%not from an approximation solution for the global dataset.
%This is the key difference between our analysis and the previous work~\cite{feldman2011unified} for the centralized setting.
%% The other parts of our analysis follows that in~\cite{feldman2011unified}.
%}.

Our key contribution in this paper is to show that in the distributed setting, it suffices to choose $b_p$ from the local approximation solution for the local dataset containing $p$, rather than from an approximation solution for the global dataset.
Furthermore, the sampling and the weighting of the coreset points can be done in a local manner.
In the following, we provide a formal verification of our discussion above.
We have the following lemma for $k$-median with
$F=\{f_\x: f_\x(p)= d(p,\x)-d(b_p,\x)+d(p,b_p), \x \in (\R^d)^k\}.$

\begin{lemma}\label{lem:kmedian}
For $k$-median, the output of Algorithm~\ref{alg:con_kmedian} is an $\epsilon$-coreset with probability at least $1-\delta$,
if $t \geq \frac{c}{\epsilon^2}\left(\mathrm{dim}(F, P) \log \mathrm{dim}(F, P) + \log \frac{1}{\delta}\right)$
for a sufficiently large constant $c$.
\end{lemma}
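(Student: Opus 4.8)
The plan is to show that the coreset error on any center set $\x$ reduces \emph{exactly} to the sampling error of the auxiliary functions $f_\x$, and then to control that error with Lemma~\ref{lem:sampling}. Write $S=\bigcup_i S_i$ and $B=\bigcup_i B_i$. First I would record the two elementary facts that make $F$ the right function class: by the triangle inequality $0\le f_\x(p)\le 2d(p,b_p)=m_p$ for every $p$ and every $\x$, so $m_p$ is a legitimate envelope for $F$; and since $b_p$ is the nearest center of the \emph{local} solution $B_i$ containing $p$, the envelope $m_p=2d(p,b_p)$ is exactly what Round~2 of Algorithm~\ref{alg:con_kmedian} samples with, and $\sum_{p\in P_i}m_p=2\cost(P_i,B_i)$, so $t_i$ is precisely the expected number of global samples falling in $P_i$.

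Next I would do the bookkeeping that turns the weighted coreset cost into a statement about $f_\x$. Writing $P_b=\{p:b_p=b\}$ and grouping by center, the center weights $w_b=|P_b|-\sum_{q\in S\cap P_b}w_q$ are chosen exactly so that $\sum_{b\in B}w_b\,d(b,\x)=\sum_{p\in P}d(b_p,\x)-\sum_{q\in S}w_q\,d(b_q,\x)$. Moreover, because $d(q,b_q)=m_q/2$ and the weights are normalized so the total sampled mass $\sum_{q\in S}w_q m_q$ equals the true mass $\sum_{p\in P}m_p$, the sampled and true costs of the map $p\mapsto d(p,b_p)$ agree exactly. Substituting $d(p,\x)-d(b_p,\x)=f_\x(p)-d(p,b_p)$ and cancelling the center and $d(\cdot,b_\cdot)$ terms, I expect to reach the clean identity
\[
\Big(\sum_{q\in S}w_q\,d(q,\x)+\sum_{b\in B}w_b\,d(b,\x)\Big)-\cost(P,\x)=\sum_{q\in S}w_q f_\x(q)-\sum_{p\in P}f_\x(p).
\]
With this in hand the coreset guarantee is purely a concentration statement about $F$: I would invoke Lemma~\ref{lem:sampling} with envelope $m_p$ and $|S|=t$ to bound the right-hand side by $\epsilon\sum_{p\in P}m_p=2\epsilon\,\cost(P,B)$ uniformly over $\x$, whenever $t\ge\frac{c}{\epsilon^2}(\dim(F,P)+\log\frac1\delta)$. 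To convert this absolute error into the relative error of Definition~\ref{def:coreset}, I would use that each $B_i$ is an $\alpha$-approximation for $P_i$ and that a local optimum is no larger than the restriction of any global solution, so $\cost(P,B)=\sum_i\cost(P_i,B_i)\le\alpha\sum_i\cost(P_i,\x)=\alpha\,\cost(P,\x)$. Hence the error is at most $2\alpha\epsilon\,\cost(P,\x)$, and rescaling $\epsilon$ by the constant $2\alpha$ (absorbed into $c$) yields an $\epsilon$-coreset.

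The main obstacle is that Lemma~\ref{lem:sampling} is stated for a single i.i.d.\ sample drawn from the \emph{global} distribution $\{m_p\}_{p\in P}$, whereas Algorithm~\ref{alg:con_kmedian} draws a fixed number $t_i$ of points from each \emph{local} distribution $\{m_p\}_{p\in P_i}$ and reweights them by the global factor $w_q=\sum_{z\in P}m_z/(t\,m_q)$. The key point I would argue is that these local samples are mutually independent and, thanks to the choice $t_i=t\sum_{z\in P_i}m_z/\sum_{z\in P}m_z$ together with the global weights, the combined estimator $\sum_{q\in S}w_q f_\x(q)$ is unbiased for $\sum_{p\in P}f_\x(p)$ (this is exactly the expectation computation preceding the lemma), with every summand bounded by $\sum_{z\in P}m_z/t$. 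Since the uniform-convergence argument behind Lemma~\ref{lem:sampling} uses only independence and the envelope bound, not identical distribution, it applies to the distributed sample; equivalently, drawing exactly $t_i$ points per block is the variance-reduced, stratified version of global i.i.d.\ sampling, so the same concentration holds. Verifying this transfer carefully is the one step I would expect to require real work, the remaining steps being the algebraic identity and the constant-factor comparison, both of which are routine.
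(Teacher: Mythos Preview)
Your proposal is correct and follows essentially the same route as the paper: establish the algebraic identity that the coreset error equals $\sum_{p\in P}f_\x(p)-\sum_{q\in S}w_q f_\x(q)$ via the center-weight cancellation, bound this with Lemma~\ref{lem:sampling} using the envelope $m_p=2d(p,b_p)$, and finish by comparing $\sum_i\cost(P_i,B_i)$ to $\cost(P,\x)$ through the local constant approximations. The one place you go further than the paper is in flagging the stratified-versus-global-i.i.d.\ sampling issue; the paper's proof simply asserts that ``$S$ is sufficiently large and chosen according to weights $m_p$, so the conditions of Lemma~\ref{lem:sampling} are met,'' without addressing the distinction, so your additional care there is warranted rather than a deviation.
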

\renewcommand{\cost}{d}

\begin{proof}
We want to show that for any set of centers $\x$ the true cost for using these centers is well approximated by the cost on the weighted coreset.
Note that our coreset has two types of points: sampled points $p\in S=\cup_{i=1}^n S_i$ with weight $w_p:=\frac{\sum_{z\in P}m_z}{m_p|S|}$ and
local solution centers $b\in B=\cup_{i=1}^n B_i$ with weight $w_b:= |P_b| - \sum_{p\in S\cap P_b} w_p$.
We use $b_p$ to represent the nearest center to $p$ in the local approximation solution.
We use $P_b$ to represent the set of points having $b$ as their closest center in the local approximation solution.

As mentioned above, we construct $f_\x$ to be the difference between the cost of $p$ and the cost of $b_p$ on $\x$ so that Lemma~\ref{lem:sampling} can be applied to $f_\x$.
Note that $0 \leq f_\x(p) \leq 2 \cost(p,b_p)$ by triangle inequality,
and $S$ is sufficiently large and chosen according to weights $m_p= \cost(p,b_p)$,
so the conditions of Lemma~\ref{lem:sampling} are met.
Then we have
\begin{align*}
\D=&\Biggl| \sum_{p\in P}f_\x(p)  -\sum_{q\in S}w_q f_\x(q)\Biggr|
\leq 2 \epsilon \sum_{p\in P}m_p = 2\epsilon \sum_{p\in P}\cost(p, b_p) = 2\epsilon \sum_{i=1}^n \cost(P_i, B_i) \leq O(\epsilon)\sum_{p\in P}\cost(p,\x)
\end{align*}
where the last inequality follows from the fact that $B_i$ is a constant approximation solution for $P_i$.

Next, we show that the coreset is constructed such that $\D$ is exactly the difference between the true cost and the weighted cost of the coreset,
which then leads to the lemma.

Note that the centers are weighted such that
\begin{align}
\sum_{b\in B} w_b \cost(b,\x)&= \sum_{b\in B} |P_b|\cost(b,\x) - \sum_{b\in B}\sum_{q\in S\cap P_b} w_q \cost(b,\x) = \sum_{p\in P} \cost(b_p,\x)- \sum_{q\in S}w_q \cost(b_q,\x).\label{eqn:weight}
\end{align}

Also note that $\sum_{p\in P}m_p = \sum_{q\in S}w_q m_q$, so
\begin{align}
\D=&\Biggl| \sum_{p\in P}\left[\cost(p,\x) -\cost(b_p,\x)+ m_p\right] -\sum_{q\in S}w_q\left[ \cost(q,\x)-\cost(b_q,\x)+  m_q\right] \Biggr| \nonumber\\
=& \Biggl| \sum_{p\in P}\cost(p,\x)-\sum_{q\in S}w_q \cost(q,\x) -\biggl[\sum_{p\in P}\cost(b_p,\x)-\sum_{q\in S}w_q \cost(b_q,\x)\biggr]\Biggr|.\label{eqn:Dfull}
\intertext{By plugging (\ref{eqn:weight}) into (\ref{eqn:Dfull}), we have}
\D=&\Biggl|\sum_{p\in P} \cost(p,\x) - \sum_{q\in S}w_q \cost(q,\x) - \sum_{b\in B}w_b \cost(b,\x)\Biggr| = \Biggl|\sum_{p\in P}\cost(p,\x)-\sum_{q\in S\cup B}w_q\cost(q,\x)\Biggr|\nonumber
\end{align}
which implies the lemma.
\end{proof}

In~\cite{feldman2011unified} it is shown that~\footnote{For both $k$-median and $k$-means in general metric spaces, $\mathrm{dim}(F,P) = O(k \log |P|)$,
so the bound for general metric spaces (including Euclidean space we focus on) can be obtained by replacing $d$ with $\log |P|$.}
$\mathrm{dim}(F,P) = O(kd)$.
So by Lemma~\ref{lem:kmedian}, when $|S| \geq O\left(\frac{1}{\epsilon^2}(kd\log(kd) + \log \frac{1}{\delta})\right)$,
the weighted cost of $S\cup B$ approximates the $k$-median cost of $P$ for any set of centers,
then $(S\cup B,w)$ is an $\epsilon$-coreset for $P$.
The total communication cost is bounded by $O(mn)$, since
even in the most general case when every node only knows its neighbors,
we can broadcast the local costs with $O(mn)$ communication (see Algorithm~\ref{alg:message}).

\renewcommand{\cost}{\mathrm{cost}}
\subsection{Proof of Theorem~\ref{thm:coreset}: $k$-means}
%\littleheader{Proof Sketch for $k$-means}

We have for $k$-means a similar lemma that when $t=O(\frac{1}{\epsilon^4}(kd\log(kd) +\log\frac{1}{\delta})+nk \log\frac{nk}{\delta}))$,
the algorithm constructs an $\epsilon$-coreset with probability at least $1-\delta$.
The key idea is the same as that for $k$-median: we use centers $b_p$ from the local approximation solutions
as an approximation to the original data points $p$, and show that the error between the total cost
and the weighted sample cost is approximately the error between the cost of $p$ and its sampled cost
(compensated by the weighted centers), which is shown to be small by Lemma~\ref{lem:sampling}.

The key difference between $k$-means and $k$-median is that triangle inequality applies directly to the $k$-median cost.
In particular, for the $k$-median problem note that $\cost(b_p,p) = d(b_p,p)$ is an upper bound for the error of $b_p$ on any set of centers,
i.e.\ $\forall \x \in (\R^d)^k$,
$d(b_p, p) \geq |d(p,\x) - d(b_p,\x)| = |\cost(p,\x)-\cost(b_p,\x)|$
by triangle inequality.
%Then we can use $m_p = \cost(b_p,p)$ to be the sampling probability.
Then we can construct $f_\x(p):= \cost(p,\x)-\cost(b_p,\x) + d(b_p, p)$ such that $h_p(\x)$ is bounded.
In contrast, for $k$-means, the error $|\cost(p,\x)-\cost(b_p,\x)| = |d(p,\x)^2 - d(b_p,\x)^2|$ does not have such an upper bound.
The main change to the analysis is that we divide the points into two categories:
good points whose costs approximately satisfy the triangle inequality (up to a factor of $1/\epsilon$) and bad points.
The good points for a fixed set of centers $\x$ are defined as
$$G(\x) = \{p \in P: |\cost(p,\x)-\cost(b_p,\x)| \leq \Delta_p\}$$
where the upper bound is $\Delta_p =\frac{\cost(p,b_p)}{\epsilon}$.
Good points we can bound as before. For bad points we can show that while the difference in cost may be larger than $\cost(p,b_p)/\epsilon$, it must still be small, namely $O(\epsilon \min\{\cost(p,\x),\cost(b_p,\x)\})$.

%Intuitively, the functions $h_p$ are restricted to be defined only over good points.
%Note that this still leads to the same dimension of $O(kd)$, due to an analysis similar to that for $k$-median. %those in Lemma 16.1 and 16.3 in~\cite{feldman2011unified}.
Formally, the functions $f_\x(p)$ are restricted to be defined only over good points:
\begin{eqnarray*}
f_\x(p) =
\begin{cases}
\cost(p,\x) - \cost(b_p,\x) + \Delta_p & \textrm{ if } p \in G(\x),\\
0 & \textrm{ otherwise.}
\end{cases}
\end{eqnarray*}
Then $\sum_{p\in P} \cost(p,\x) - \sum_{q\in S \cup B} w_q \cost(q,\x) $ is decomposed into three terms:
\begin{eqnarray}
&& \sum_{p \in P} f_\x(p) -\sum_{q \in S} w_q f_\x(q) \label{eqn:means:term1}\\
&& + \sum_{p \in P\setminus G(\x)} [\cost(p,\x) - \cost(b_p,\x) + \Delta_p]\label{eqn:means:term2}\\
&& - \sum_{q \in S\setminus G(\x)} w_q [\cost(q,\x) - \cost(b_q,\x)  + \Delta_q]\label{eqn:means:term3}
\end{eqnarray}

Lemma~\ref{lem:sampling} bounds (\ref{eqn:means:term1}) by $O(\epsilon) \cost(P,\x)$, but we need an accuracy of
$\epsilon^2$ to compensate for the $1/\epsilon$ factor in the upper bound,
resulting in a $O(1/\epsilon^4)$ factor in the sample complexity.
%when
%$|S|=O\left(\frac{1}{\epsilon^4}\left(\dim(H,(\R^d)^k) + \log\frac{1}{\delta}\right)\right)$,
%\begin{eqnarray}
%&& \left|\sum_{p \in P} h_p(\x) - \sum_{p \in S} w_p h_p(\x)\right|\nonumber\\
% & \leq  & \epsilon^2 \max_{p \in P} \frac{h_p(\x)}{m_p}\sum_{q\in P} m_q\nonumber\\
%& \leq &  \epsilon^2 \max_{p \in P} \frac{2 \cost(p,b)/\epsilon}{m_p}\sum_{q\in P} m_q\nonumber\\
%& \leq & 2 \epsilon\sum_{j=1}^n\cost(P_j,B_j) = O(\epsilon) \cost(P,\x).\label{eqn:means:bound1}
%\end{eqnarray}

%It is now sufficient to bound (\ref{eqn:means:term2})(\ref{eqn:means:term3}).
We begin by bounding (\ref{eqn:means:term2}).
Note that for each term in (\ref{eqn:means:term2}), $ |\cost(p,\x) - \cost(b_p,\x)| >\Delta_p$ since $p \not\in G(\x)$.
Furthermore, $p \not\in G(\x)$ only when $p$ and $b_p$ are close to each other and far away from $\x$.
In Lemma~\ref{lem:techBound} we use this to show that
%\begin{eqnarray*}
$|\cost(p,\x) - \cost(b_p,\x)| \leq O(\epsilon)\min\{\cost(p,\x), \cost(b_p,\x)\}.$
%\end{eqnarray*}
The details are presented in the appendix.

Using Lemma~\ref{lem:techBound}, (\ref{eqn:means:term2}) can be bounded by
%\begin{eqnarray*}
$O(\epsilon)\sum_{p \in P\setminus G(\x)} \cost(p,\x) \leq O(\epsilon) \cost(P, \x).$
%\end{eqnarray*}

Similarly, by the definition of $\Delta_q$ and Lemma~\ref{lem:techBound}, (\ref{eqn:means:term3}) is bounded by
\begin{eqnarray*}
(\ref{eqn:means:term3})&\leq&\sum_{q\in S \setminus G(\x)} 2 w_q |\cost(q,\x) - \cost(b_q,\x)| \leq O(\epsilon ) \sum_{q \in S \setminus G(\x)} w_q\ \cost(b_q,\x) \\
& \leq & O(\epsilon )  \sum_{b\in B}\left(\sum_{q \in P_b \cap S} w_q\right) \cost(b,\x).
\end{eqnarray*}
Note that the expectation of $\sum_{q\in P_{b}\cap S} w_q$ is $|P_b|$.
By a sampling argument (Lemma~\ref{lem:weight}),
if $t\geq O(nk\log\frac{nk}{\delta})$, then $\sum_{q\in P_{b}\cap S} w_q \leq 2|P_{b}|$.
Then (\ref{eqn:means:term3}) is bounded by
%\begin{eqnarray*}
$O(\epsilon) \sum_{b \in B} \cost(b,\x) |P_{b}|= O(\epsilon) \sum_{p \in P} \cost(b_p,\x)$
%\end{eqnarray*}
where $\sum_{p \in P} \cost(b_p,\x)$ is at most a constant factor more than the optimum cost.

Since each of (\ref{eqn:means:term1}),(\ref{eqn:means:term2}), and (\ref{eqn:means:term3}) is $O(\epsilon)\cost(P,\x)$, we know that their sum is the same magnitude.
Combining the above bounds, we have
%\begin{eqnarray*}
$\left|\cost(P,\x) -  \sum_{q\in S\cup B} w_q \cost(q,\x) \right|
 \leq  O(\epsilon)\cost(P,\x).$
%\end{eqnarray*}
The proof is then completed by choosing a suitable $\epsilon$,
and bounding $\mathrm{dim}(F, P) = O(kd)$ as in~\cite{feldman2011unified}.

\section{Effect of Network Topology on Communication Cost}\label{sec:clustering}

In the previous section, we presented a distributed coreset construction algorithm.
The coreset constructed can then be used as a proxy for the original data, and we can run any distributed clustering
algorithm on it. In this paper, we discuss the approach of simply collecting all local portions of the distributed coreset
and run non-distributed clustering algorithm on it.
If there is a central coordinator in the communication graph, then we can simply send the local portions of the coreset
to the coordinator which can perform the clustering task. The total communication cost is just the size of the coreset.

In this section, we consider the distributed clustering tasks where the nodes are arranged in some arbitrary connected topology, and can only communicate with their neighbors.
We propose a message passing approach for globally sharing information, % on the graph,
and use it for collecting information for coreset construction and sharing the local portions of the coreset. % to all nodes.
We also consider the special case when the graph is a rooted tree.

\begin{algorithm}[!tbhp]
\caption{Distributed clustering on a graph}
\label{alg:kmedian_graph}
\begin{algorithmic}
\STATE{\textbf{Input:} $\{P_i, 1 \leq i \leq n\}$: local datasets; $\{N_i,1 \leq i \leq n\}$: the neighbors of $v_i$; $\A_\alpha$: an $\alpha$-approximation algorithm for weighted clustering instances.}
\STATE
\STATE{\tw \textbf{Round 1:} on each node $v_i$ }
\STATE{\bw $\mybullet$ Construct its local portion $D_i$ of an $\epsilon/2$-coreset by Algorithm~\ref{alg:con_kmedian}, \\
\bw \bulletspace using Message-Passing for communicating the local costs.}
\STATE{\tw \textbf{Round 2:} on each node $v_i$ }
\STATE{\bw $\mybullet$ Call Message-Passing($D_i, N_i$).}
\STATE{\bw $\mybullet$ $\x=\A_\alpha(\bigcup_{j} D_j)$.}
\STATE
\STATE{\textbf{Output:} $\x$}
\end{algorithmic}
\end{algorithm}

\begin{algorithm}[!tbhp]
\caption{Message-Passing($I_i$, $N_i$)}
\label{alg:message}
\begin{algorithmic}
\STATE{\textbf{Input:} $I_i$ is the message, $N_i$ are the neighbors.}
\STATE
\STATE{\bw $\mybullet$ Let $R_i$ denote the information received.}
\STATE{\bw \bulletspace Initialize $R_i = \{I_i\}$, and send $I_i$ to all the neighbors.}
\STATE{\bw $\mybullet$ While $R_i \neq \{I_j, 1\leq j\leq n\}$:\\
\bw \bulletspace \tw If receive message $I_j \not\in R_i$, \\
\bw \bulletspace \tw \tw $R_i = R_i \cup \{I_j\}$ and send $I_j$ to all the neighbors.
}
\end{algorithmic}
\end{algorithm}

\subsection{General Graphs}
%We now show that Algorithm~\ref{alg:message} successfully passes the information to every node
%and then prove Theorem~\ref{thm:clustering}.
%
%\begin{lemma}\label{lem:message}
%At the end of Algorithm~\ref{alg:message}, %Message-Passing, %Message-Passing($I_i$, $N_i$), %
%all nodes $v_i \in V$ receive $\{I_j, 1 \leq j\leq n\}$.
%The total communication cost is $O(m\sum_{j=1}^n |I_j|)$.
%\end{lemma}
%
%\begin{proof}
%For any $j$, $I_j$ propagates on the graph in a breadth-first-search style,
%so at the end every node receives $I_j$.
%This holds for all $1\leq j\leq n$, so all nodes receive $\{I_j, 1 \leq j\leq n\}$ at the end.
%
%For any node $v_i$ and $j \in [n]$, $v_i$ sends out $I_j$ once, so the communication of $v_i$
%is $|N_i| \times \sum_{j=1}^n |I_j|$.
%The total communication cost is then $O(m\sum_{j=1}^n |I_j|)$.
%\end{proof}
%
%We can now bound the total communication complexity of Algorithm \ref{alg:con_kmedian}.

We now present the main result for distributed clustering on graphs. %bound the total communication complexity of Algorithm \ref{alg:con_kmedian}.
\begin{theorem}\label{thm:clustering}
Given an $\alpha$-approximation algorithm for weighted $k$-means ($k$-median respectively) as a subroutine, there exists an algorithm
that  with probability at least $1-\delta$ outputs a $(1+\epsilon)\alpha$-approximation solution for distributed $k$-means ($k$-median respectively) clustering.
The total communication cost is $O(m (\frac{1}{\epsilon^4}(kd\log(kd) + \log\frac{1}{\delta}) + nk\log\frac{nk}{\delta}))$ for $k$-means,
and $O(m (\frac{1}{\epsilon^2}(kd \log(kd) + \log\frac{1}{\delta}) + nk))$ for $k$-median.
\end{theorem}

\begin{proof}
The details are presented in Algorithm~\ref{alg:kmedian_graph}.
By Theorem~\ref{thm:coreset}, the output of Algorithm~\ref{alg:con_kmedian} is a coreset.
Observe that in Algorithm~\ref{alg:message}, for any $j$, $I_j$ propagates on the graph in a breadth-first-search style,
so at the end every node receives $I_j$.
This holds for all $1\leq j\leq n$, so all nodes has a copy of the coreset at the end,
and thus the output is a $(1+\epsilon)\alpha$-approximation solution.

Also observe that in Algorithm~\ref{alg:message}, for any node $v_i$ and $j \in [n]$, $v_i$ sends out $I_j$ once,
so the communication of $v_i$ is $|N_i| \times \sum_{j=1}^n |I_j|$.
The communication cost of Algorithm~\ref{alg:message} is $O(m\sum_{j=1}^n |I_j|)$.
Then the total communication cost of Algorithm~\ref{alg:kmedian_graph} follows from the size of the coreset constructed.
\end{proof}

In contrast, an approach where each node constructs an $\epsilon$-coreset for $k$-means and sends it to the other nodes incurs communication cost of $\tilde{O}(\frac{mnkd}{\epsilon^4})$.
Our algorithm significantly reduces this.

% Also note that the distributed coreset construction algorithm can be also be used
% on specific types of communication graphs.
% Instead of Algorithm~\ref{alg:message}, other communication methods can be used,
% leading to a reduction in the cost.
% Take $k$-means for discussion.
% If a central coordinator exists, with $\tilde{O}(\frac{kd}{\epsilon^4} + nk)$ communication,
% the coordinator can collect the cost of local solutions and send out the sum,
% and then collect the coreset and compute a $(1+\epsilon)\alpha$-approximation
% with any $\alpha$-approximation algorithm.

\subsection{Rooted Trees}\label{sec:clustering_tree}
Our algorithm can also be applied on a rooted tree, and compares favorably to other approaches involving coresets~\cite{zhang2008approximate}.
%To compare with algorithms in~\cite{zhang2008approximate}, assume a rooted tree of height $h$ inside the communication network is known.
We can restrict message passing to operating along this tree, leading to the following theorem.%for this special case.

\begin{theorem}\label{thm:clustering_node}
Given an $\alpha$-approximation algorithm for weighted $k$-means ($k$-median respectively) as a subroutine, there exists an algorithm
that with probability at least $1-\delta$ outputs a $(1+\epsilon)\alpha$-approximation solution for
distributed $k$-means ($k$-median respectively) clustering on a rooted tree of height $h$.
The total communication cost is $O(h (\frac{1}{\epsilon^4}(kd \log(kd) + \log\frac{1}{\delta}) + nk\log\frac{nk}{\delta}))$ for $k$-means,
and $O(h (\frac{1}{\epsilon^2}(kd \log(kd) + \log\frac{1}{\delta}) + nk))$ for $k$-median.
\end{theorem}
\begin{proof}
We can construct the distributed coreset using Algorithm~\ref{alg:con_kmedian}.
In the construction, the costs of the local approximation solutions are sent from every node to the root, and the sum is sent to every node by the root.
After the construction, the local portions of the coreset are sent from every node to the root.
A local portion $D_i$ leads to a communication cost of $O(|D_i| h)$, so the total communication cost is $O(h\sum_{i=1}^n |D_i|)$.
Once the coreset is constructed at the root, the $\alpha$-approximation algorithm can be applied centrally, and the results can be sent back to all nodes.
%Then the theorem follows from Theorem~\ref{thm:coreset}.
\end{proof}

Our approach improves the cost of $\tilde{O}(\frac{nh^4kd}{\epsilon^4})$ for $k$-means and the cost of $\tilde{O}(\frac{nh^2kd}{\epsilon^2})$ for $k$-median in~\cite{zhang2008approximate}~\footnote{
 Their algorithm used
 coreset construction as a subroutine.
 The construction algorithm they used builds coreset of size $\tilde{O}(\frac{nkh}{\epsilon^d}\log |P|)$.
% For fair comparison,
% here we assume it uses state-of-the-art coreset construction.
 Throughout this paper, when we compare to \cite{zhang2008approximate} we assume they use the coreset construction technique of \cite{feldman2011unified} to reduce their coreset size and communication cost.
 }.
The algorithm in~\cite{zhang2008approximate} builds on each node a coreset for the union of coresets from its children,
and thus needs $O(\epsilon/h)$ accuracy to prevent the accumulation of errors.
Since the coreset construction subroutine has quadratic dependence on $1/\epsilon$ for $k$-median (quartic for $k$-means),
the algorithm then has quadratic dependence on $h$ (quartic for $k$-means).
Our algorithm does not build coreset on top of coresets,
resulting in a better dependence on the height of the tree $h$.

% FIXISIN
In a general graph, any rooted tree will have its height $h$ at least as large as half the diameter.
For sensors in a grid network, this implies $h = \Omega(\sqrt n)$.
In this case, our algorithm gains a significant improvement over existing algorithms.

\section{Experiments}\label{sec:exp}
In our experiments we seek to determine whether our algorithm is effective for the clustering tasks and how it compares to the other distributed coreset algorithms~\footnote{Our theoretical analysis shows that our algorithm has better bounds on the communication cost.
Since the bounds are from worst-case analysis, it is meaningful to verify that our algorithm also empirically
outperforms other distributed coreset algorithms.}.
We present the $k$-means cost of the solution produced by our algorithm with varying communication cost,
and compare to those of other algorithms when they use the same amount of communication.

\smallskip
\noindent
\littleheader{Data sets}
Following the setup of~\cite{zhang2008approximate,bahmani2012scalable}, for the synthetic data
we randomly choose $k=5$ centers from the standard Gaussian distribution in $\R^{10}$,
and sample equal number of $20,000$ points from the Gaussian distribution
around each center.
Note that, as in~\cite{zhang2008approximate,bahmani2012scalable}, we use the cost of the centers as a baseline for comparing the clustering quality.
We choose the following real world data sets from~\cite{Bache+Lichman:2013}:
Spam (4601 points in $\R^{58}$), Pendigits (10992 points in $\R^{16}$), Letter (20000 points in $\R^{16}$), and ColorHistogram of the Corel Image data set (68040 points in $\R^{32}$).
We use $k=10$ for these data sets.
We further choose YearPredictionMSD (515345 points in $\R^{90}$) for larger scale experiments,
%and kddcup10 (494020 points in $\R^{34}$, which is a 10 percent subset of the kddcup data set)~\cite{kddcup99}.
and use $k=50$ for this data set.

\smallskip
\noindent
\littleheader{Experimental Methodology}
To transform the centralized clustering data sets into distributed data sets we first generate a communication graph
connecting local sites, and then partition the data into local data sets.
To evaluate our algorithm, we consider several network topologies and partition methods.

The algorithms are evaluated on three types of communication graphs: random, grid, and preferential.
The random graphs are Erd{\"o}s-Renyi graphs $G(n,p)$ with $p=0.3$, i.e.\ they are generated by including each potential edge independently with probability $0.3$.
The preferential graphs are generated according to the preferential attachment mechanism in the Barab\'{a}si-Albert model~\cite{Reka:RevModPhys2002}.
For data sets Spam, Pendigits, and Letter, we use random/preferential graphs with $10$ sites and $3 \times 3$ grid graphs.
For synthetic data set and ColorHistogram, we use random/preferential graphs with $25$ sites and $5 \times 5$ grid graphs.
For large data set YearPredictionMSD, we use random/preferential graphs with $100$ sites and $10 \times 10$ grid graphs.

The data is then distributed over the local sites.
When the communication network is a random graph, we consider three partition methods:
uniform, similarity-based, and weighted.
In the uniform partition, each data point in the global data set is assigned to the local sites with equal probability.
In the similarity-based partition, each site has an associated data point randomly selected from the global data.
Each data point in the global data is then assigned to the site with probability proportional to its similarity to the associated point of the site,
where the similarities are computed by Gaussian kernel function.
In the weighted partition, each local site is assigned a weight chosen by $|N(0,1)|$
and then each data point is distributed to the local sites with probability proportional to the site's weight.
When the network is a grid graph, we consider the similarity-based and weighted partitions.
When the network is a preferential graph, we consider the degree-based partition, where each point is assigned with probability proportional to the site's degree.

To measure the quality of the coreset generated, we run Lloyd's algorithm on the coreset and the global data respectively to get two solutions,
and compute the ratio between the costs of the two solutions over the global data.
The average ratio over 30 runs is then reported.
We compare our algorithm with COMBINE, the method of combining a coreset from each local data set, and with the algorithm of~\cite{zhang2008approximate} (Zhang et al.).
When running the algorithm of Zhang et al., we restrict the general communication network to a spanning tree by picking a root uniformly at random and performing a breadth first search.

\newcommand{\figWidth}{0.33\textwidth}
\newcommand{\betweenWidth}{.0in}

\begin{figure}[t]
\begin{center}
\centering
    \subfloat[random graph, uniform]{\includegraphics[width=\figWidth]{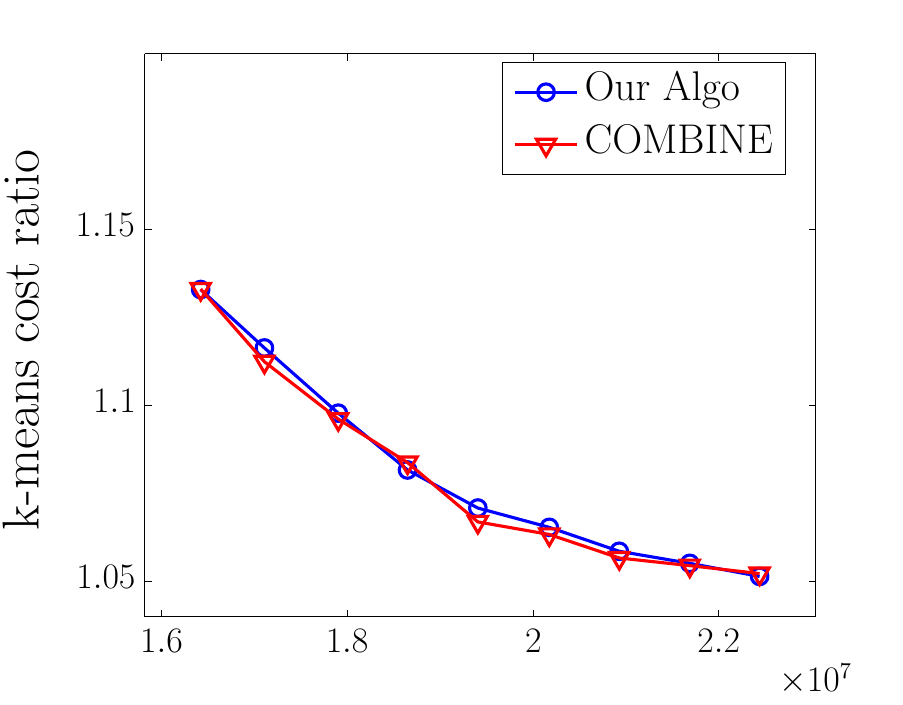}}
    \hspace*{\betweenWidth}
    \subfloat[random graph, similarity-based]{\includegraphics[width=\figWidth]{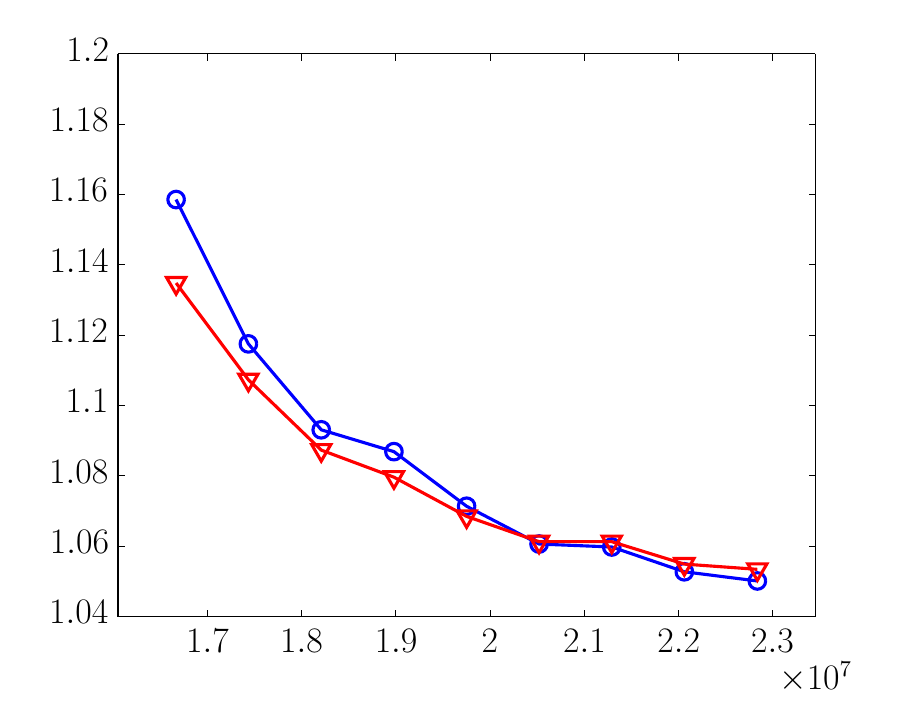}}
    \hspace*{\betweenWidth}
    \subfloat[random graph, weighted]{\includegraphics[width=\figWidth]{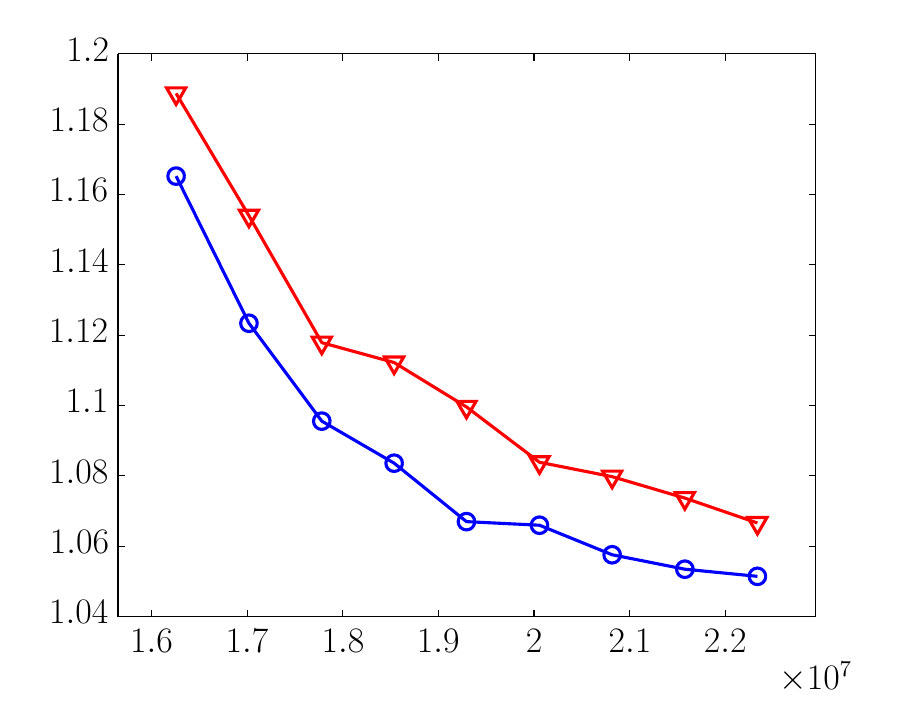}}\\
    \subfloat[grid graph, similarity-based]{\includegraphics[width=\figWidth]{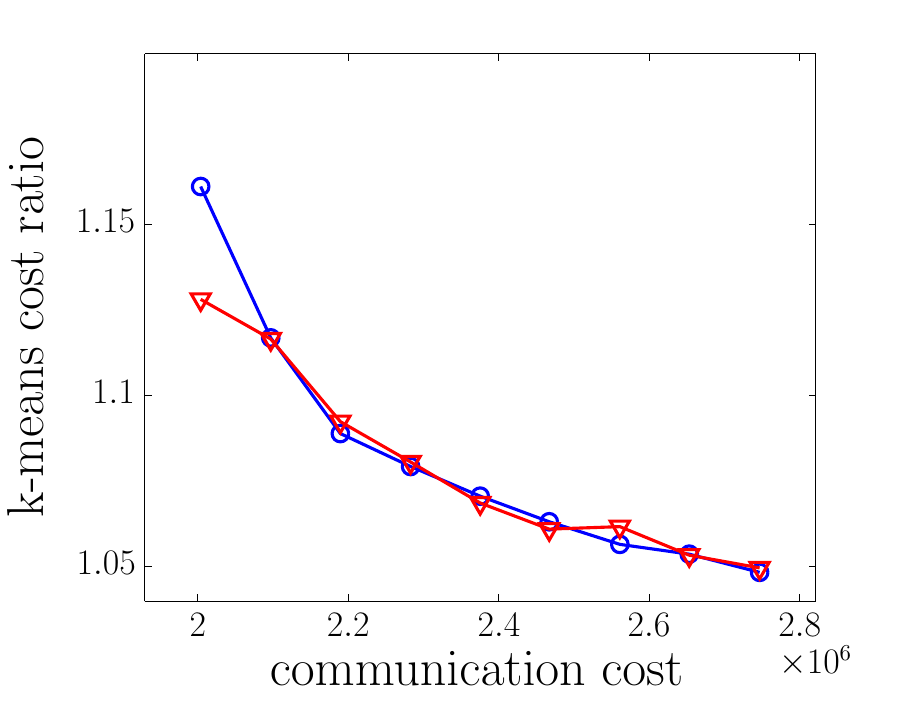}}
    \hspace*{\betweenWidth}
    \subfloat[grid graph, weighted]{\includegraphics[width=\figWidth]{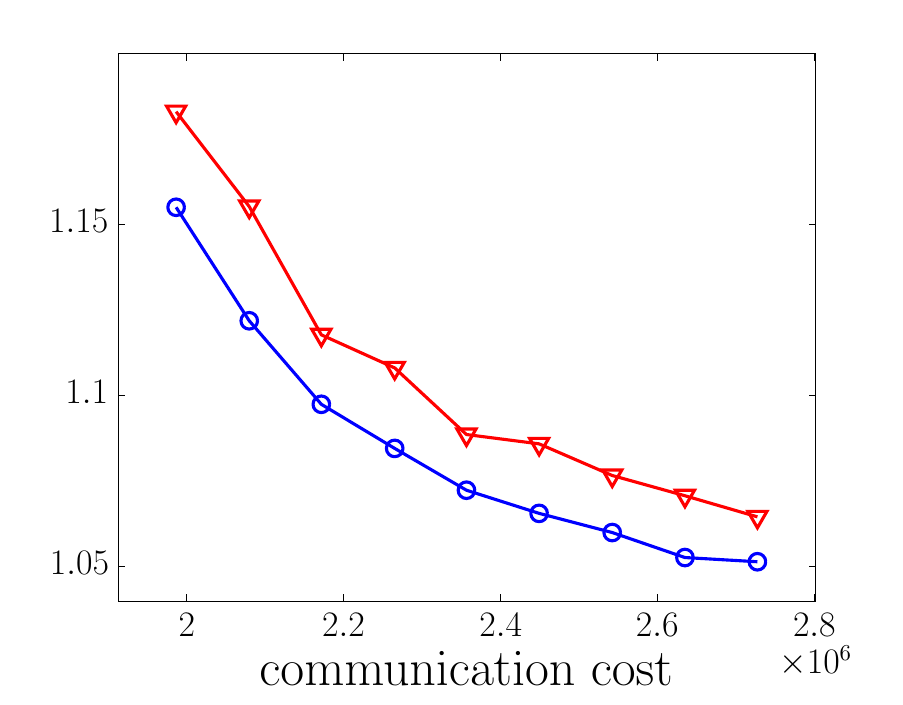}}
    \hspace*{\betweenWidth}
    \subfloat[preferential graph, degree-based]{\includegraphics[width=\figWidth]{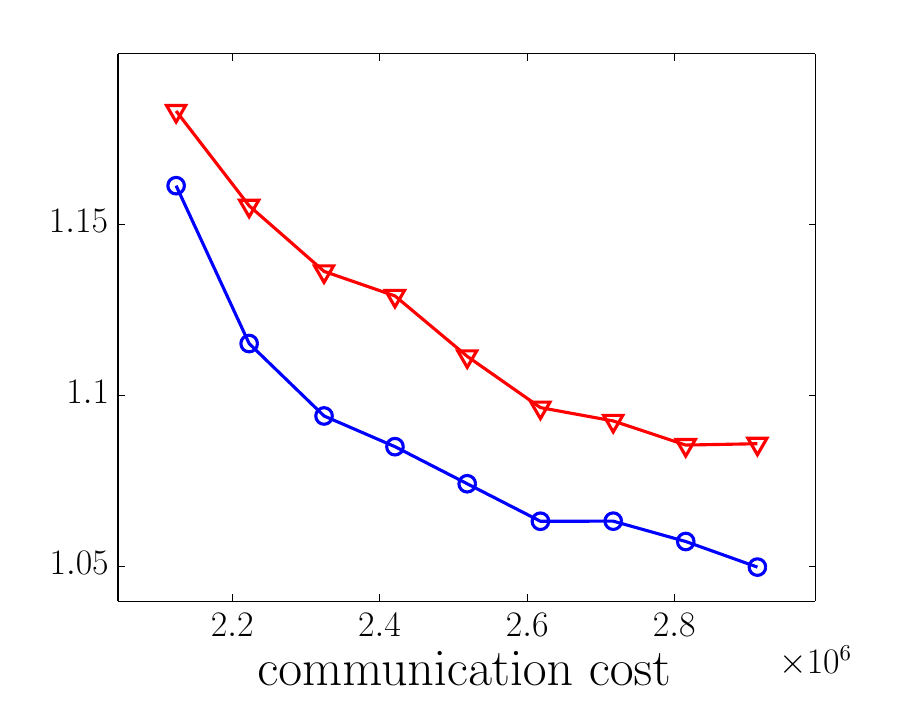}}
\end{center}
\caption{$k$-means cost (normalized by baseline) v.s. communication cost over graphs. The titles indicate the network topology and partition method.
}\label{fig:graph_result}
\end{figure}

\begin{figure}[t]
\begin{center}
\centering
    \subfloat[random graph, uniform]{\includegraphics[width=\figWidth]{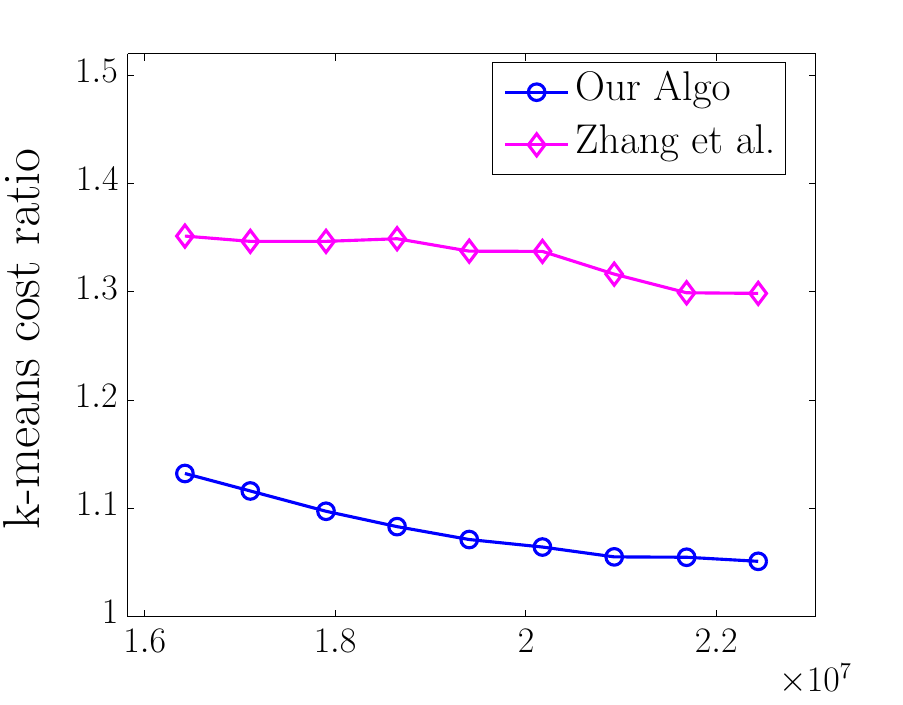}}
    \hspace*{\betweenWidth}
    \subfloat[random graph, similarity-based]{\includegraphics[width=\figWidth]{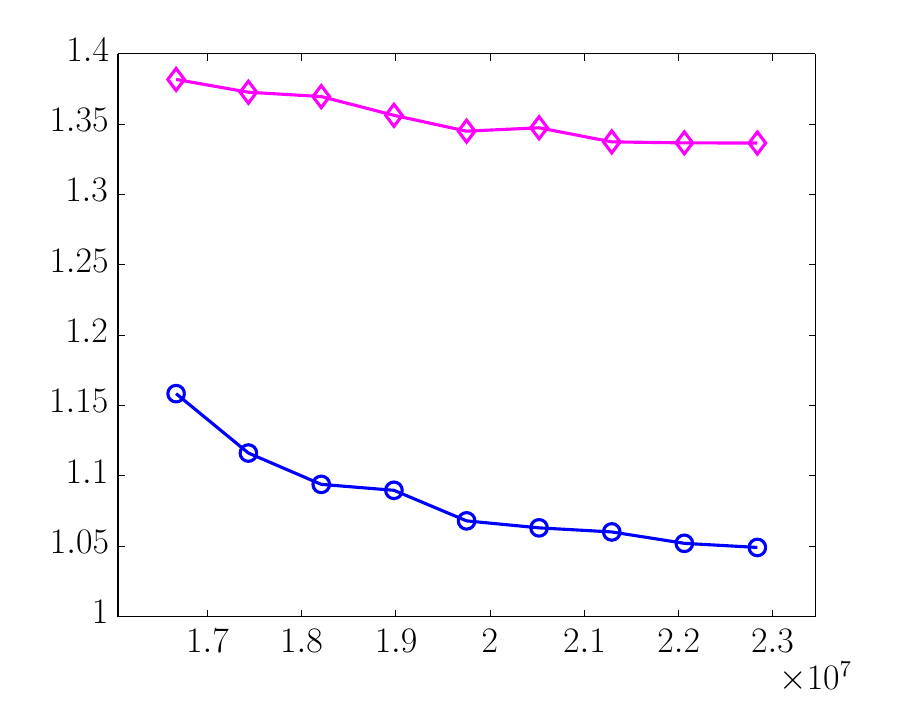}}
    \hspace*{\betweenWidth}
    \subfloat[random graph, weighted]{\includegraphics[width=\figWidth]{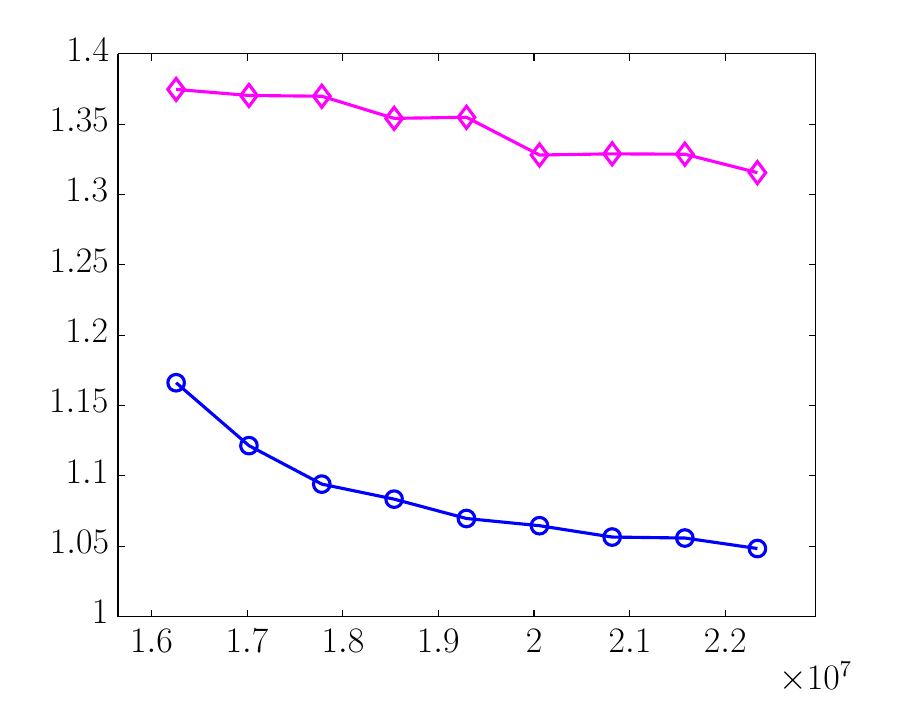}}\\
    \subfloat[grid graph, similarity-based]{\includegraphics[width=\figWidth]{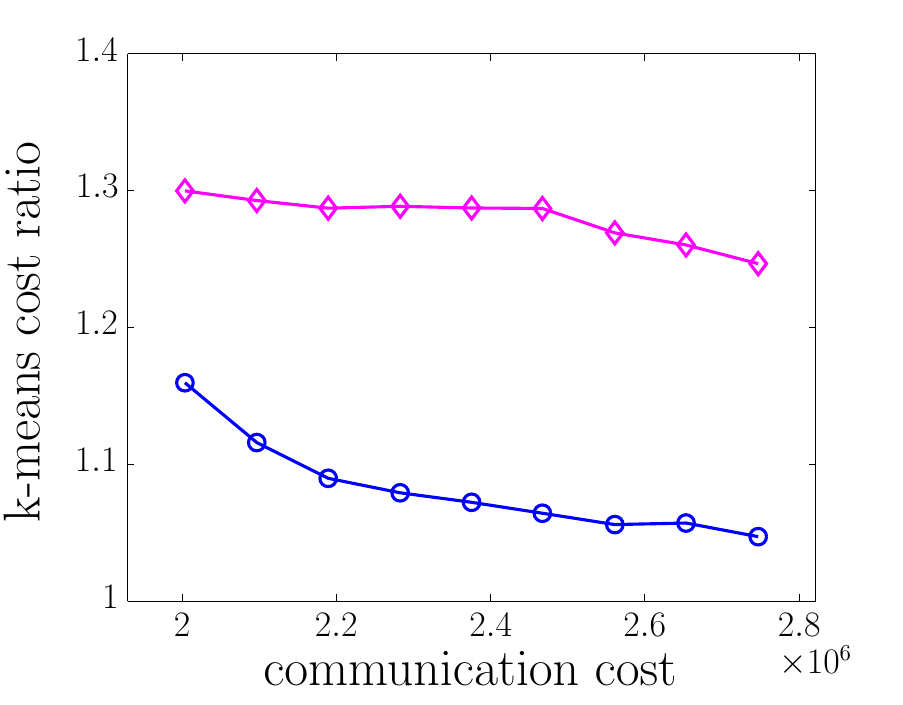}}
    \hspace*{\betweenWidth}
    \subfloat[grid graph, weighted]{\includegraphics[width=\figWidth]{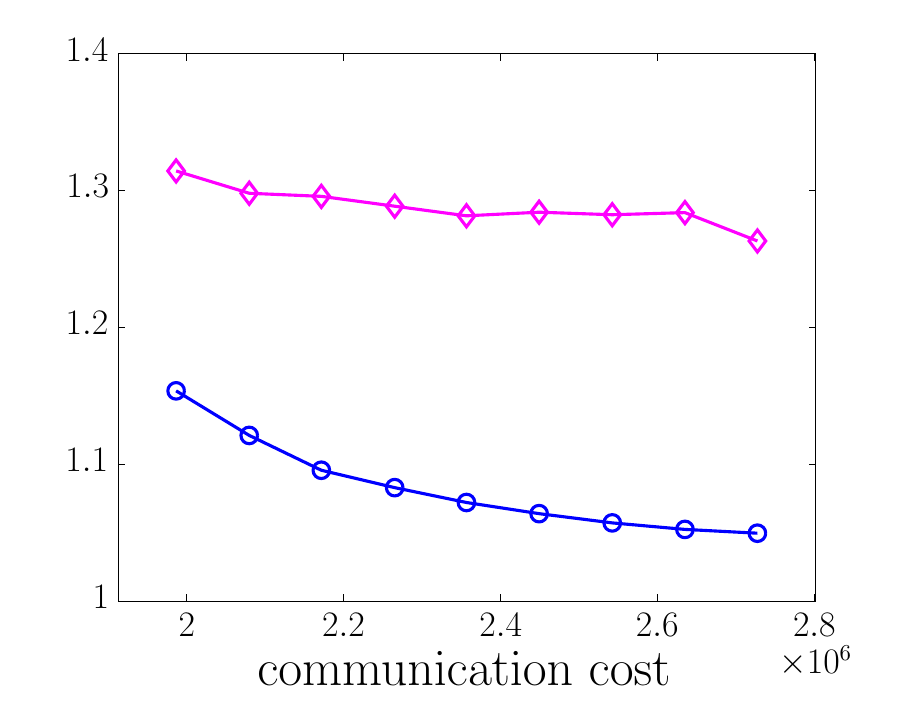}}
    \hspace*{\betweenWidth}
    \subfloat[preferential graph, degree-based]{\includegraphics[width=\figWidth]{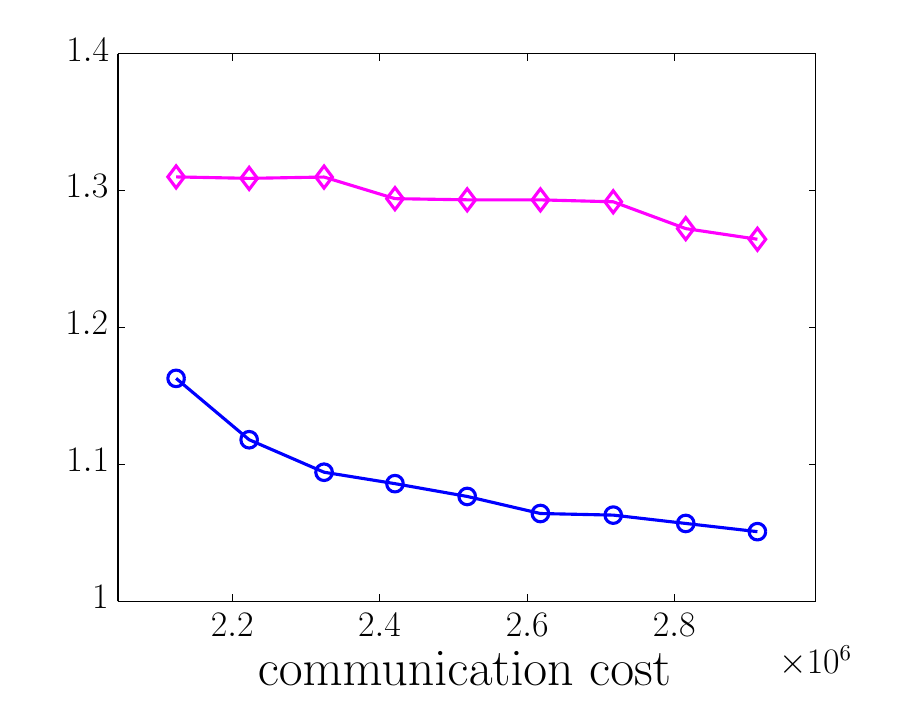}}
\end{center}
\caption{$k$-means cost (normalized by baseline) v.s. communication cost over the spanning trees of the graphs. The titles indicate the network topology and partition method.
}\label{fig:tree_result}
\end{figure}

\smallskip
\noindent
\littleheader{Results}
Here we focus on the results of the largest data set YearPredictionMSD,
and in Appendix~\ref{app:exp} we present the experimental results for all the data sets.

Figure~\ref{fig:graph_result} shows the results over different network topologies and partition methods.
We observe that the algorithms perform well with much smaller coreset sizes than predicted by the theoretical bounds.
For example, to get $1.1$ cost ratio, the coreset size and thus the communication needed is only $0.1\%-1\%$ of the theoretical bound.

In the uniform partition, our algorithm performs nearly the same as COMBINE.
This is not surprising since our algorithm reduces to the COMBINE algorithm when each local site has the same cost
and the two algorithms use the same amount of communication.
In this case, since in our algorithm the sizes of the local samples are proportional to the costs of the local solutions,
it samples the same number of points from each local data set.
This is equivalent to the COMBINE algorithm with the same amount of communication.
In the similarity-based partition, similar results are observed as it also leads to balanced local costs.
However, when the local sites have significantly different costs (as in the weighted and degree-based partitions),
our algorithm outperforms COMBINE.
As observed in Figure~\ref{fig:graph_result}, the costs of our solutions consistently improve over those of COMBINE by $2\%-5\%$.
Our algorithm then saves $10\%-20\%$ communication cost to achieve the same approximation ratio.

%Figure~\ref{fig:tree_result} shows the results over the spanning trees of the graphs.
%Our algorithm outperforms COMBINE when the local sites have significantly different costs, similar to the pattern observed on graphs.
%Both our algorithm and COMBINE perform much better than the algorithm of Zhang et al., achieving about $20\%$ improvement in cost.
%This is due to the fact that their algorithm needs larger coresets to prevent the accumulation of errors
%when constructing coresets from component coresets, and thus needs higher communication cost to achieve the same approximation ratio.

Figure~\ref{fig:tree_result} shows the results over the spanning trees of the graphs.
Our algorithm performs much better than the algorithm of Zhang et al., achieving about $20\%$ improvement in cost.
This is due to the fact that their algorithm needs larger coresets to prevent the accumulation of errors
when constructing coresets from component coresets, and thus needs higher communication cost to achieve the same approximation ratio.

Similar results are observed on the other datasets, which are presented in Appendix~\ref{app:exp}.

\section{Additional Related Work}\label{sec:work}

Many empirical algorithms adapt the centralized algorithms to the distributed setting.
They generally provide no bound for the clustering quality or the communication cost.
For instance, a technique is proposed in~\cite{forman2000distributed} to adapt several iterative center-based data clustering algorithms
including Lloyd's algorithm for $k$-means to the distributed setting, where sufficient statistics instead of the raw data are sent to a central coordinator.
This approach involves transferring data back and forth in each iteration, and thus the communication cost depends on the number of iterations.
Similarly, the communication costs of the distributed clustering algorithms proposed in~\cite{datta2005k} and~\cite{tasoulis2004unsupervised} depend on the number of iterations.
Some other algorithms gather local summaries and then perform global clustering on the summaries.
The distributed density-based clustering algorithm in~\cite{januzaj2003towards} clusters and
computes summaries for the local data at each node, and sends the local summaries to a central node where the global clustering is carried out.
This algorithm only considers the flat two-tier topology.
Some in-network aggregation schemes for computing statistics over distributed data are useful for such distributed clustering algorithms.
For example, an algorithm is provided in~\cite{considine2004approximate} for approximate duplicate-sensitive aggregates across distributed data sets, such as SUM. An algorithm is proposed in~\cite{greenwald2004power} for power-preserving computation of order statistics such as quantile.

Several coreset construction algorithms have been proposed for $k$-median, $k$-means and $k$-line median clustering~\cite{har2004coresets,chen2006k,har2007smaller,langberg2010universal,feldman2011unified}.
For example, the algorithm in~\cite{feldman2011unified} constructs a coreset of size $\tilde{O}(kd/\epsilon^2)$ whose cost approximates that of the original data up to accuracy $\epsilon$ with respect to $k$-median in $\R^d$.
All of these algorithms consider coreset construction in the centralized setting, while our construction algorithm is for the distributed setting.

There has also been work attempting to parallelize clustering algorithms.
\cite{feldman2012effective} showed that coresets could be constructed in parallel and then merged together.
Bahmani et al.~\cite{bahmani2012scalable} adapted {\tt k-means++} to the parallel setting.
Their algorithm, k-means$||$, essentially builds $O(1)$-coreset of size $O(k \log |P|)$.
However, it cannot build $\epsilon$-coreset for $\epsilon=o(1)$,
and thus can only guarantee constant approximation solutions.
%In Scalable {\tt k-means++}~\cite{bahmani2012scalable}, Bahmani
%et al. adapted {\tt k-means++} to the parallel setting. In this setting a centralized problem is broken up and distributed to several processors with the aim of reducing computation time.
%In contrast to the distributed setting, the communication costs are ignored. %Additionally, their algorithm does not achieve constant approximation guarantees when only $k$ centers are used.
%% like our algorithm, k-means|| is underspecified and relies on another algorithm to recluster its output into k centers. They prove however that this is approximation preserving.

%There is also related work on clustering on samples~\cite{ben2004framework}.
%They study the approach of obtaining an approximation solution by clustering only a sample drawn i.i.d. from the data.
%For $k$-median, with a sample of size $\tilde{O}(\frac{k}{\epsilon^2})$,
%they obtain a solution with an average cost bounded by twice that of the optimal average cost plus an error bound $\epsilon$.
%If we convert it to a multiplicative approximation factor, the factor depends on the optimal average cost.
%When the optimal average cost is small, the factor becomes large.
%This happens when there are outlier points far away from all other points. In this case, after normalization, the average cost can be very small, and no good approximation factor is available. The coreset approach provides better guarantees.
%Additionally, their approach is not applicable to $k$-means.

There is also related work providing approximation solutions for $k$-median based on random sampling~\cite{ben2004framework}.
Particularly, they showed that given a sample of size $\tilde{O}(\frac{k}{\epsilon^2})$ drawn i.i.d.\ from the data,
there exists an algorithm that outputs a solution with an average cost bounded by twice the optimal average cost plus an error bound~$\epsilon$.
If we convert it to a multiplicative approximation factor, the factor depends on the optimal average cost.
When there are outlier points far away from all other points,
the optimal average cost can be very small after normalization,
then the multiplicative approximation factor is large.
The coreset approach provides better guarantees.
Additionally, their approach is not applicable to $k$-means.

Balcan et al.~\cite{balcan2012distributed} and Daume et al.~\cite{daume2012efficient} consider fundamental communication complexity questions arising when doing classification in distributed settings.
In concurrent and independent work, Vempala et al.~\cite{kannan2013nimble} study several optimization problems in distributed settings, including $k$-means clustering under an interesting separability assumption.

%\section{Conclusions}\label{sec:conclusion}
% In this paper, we considered clustering data distributed across nodes
% which communicate over the edges of a graph.
% We first propose a distributed coreset construction algorithm
% for additive center-based clustering including $k$-means and $k$-median,
% in which each node generates a local portion of a global coreset using only
% a single value from each other node to guide the local construction.
% This leads to an algorithm for clustering with lower communication costs.
% Experimental results show that our algorithm outperforms other coreset-based distributed clustering algorithms.
%
% Our theoretical analysis shows that the size of the coreset (and thus the communication cost) has a linear dependence on the dimension of the data. This implies that our algorithm also works for high-dimensional data. An interesting direction is to design distributed dimension reduction algorithms (e.g.\ distributed PCA) with theoretical guarantees, so that we can further reduce the communication cost for high-dimensional data.

%\smallskip
%\noindent
%\littleheader{Acknowledgements}
\paragraph{Acknowledgements}
This work was supported by ONR grant N00014-09-1-0751, AFOSR grant FA9550-09-1-0538, and by a Google Research Award.
We thank Le Song for generously allowing us to use his computer cluster.

\bibliographystyle{abbrv}
\bibliography{distributedClusteringRef}

\appendix

\section{Proofs for Section \ref{sec:coreset}}
The proof of Lemma~\ref{lem:sampling} follows from the analysis in~\cite{feldman2011unified}, although not explicitly stated there.
We begin with the following theorem for uniform sampling on a function space.
The theorem is from~\cite{feldman2011unified} but rephrased for convenience (and corrected).\footnote{As pointed out in~\cite{bachem2018scalable}, the proof in~\cite{feldman2011unified} used a theorem from~\cite{li2000improved} which used the notion of pseudo-dimension $d$ of a function space. However, the definition of the dimension $d'$ of a function space in~\cite{feldman2011unified} is different from $d$. Fortunately, by~\cite{har2011geometric}, $d = O(d' \log d')$. Therefore, the theorem is corrected by replacing $\dim(F, P)$ with $\dim(F, P) \log \dim(F, P)$.}

\begin{theorem}[Theorem 6.9 in~\cite{feldman2011unified}]\label{thm:sampling}
Let $F$ be a set of functions from $P$ to $\R_{\geq 0}$, and let $\epsilon \in (0, 1)$. Let $S$ be a sample of
$$ |S| = \frac{c}{\epsilon^2}(\dim(F, P) \log \dim(F, P) + \log\frac{1}{\delta})$$
i.i.d items from $P$, where $c$ is a sufficiently large constant. Then, with probability at least $1 - \delta$,
for any $f \in F$ and any $r \geq 0$,
\begin{eqnarray*}
\left|\frac{\sum_{p \in P, f(p) \leq r} f(p)}{|P|} - \frac{\sum_{q \in S, f(q) \leq r} f(q)}{|S|}\right| \leq \epsilon r.
\end{eqnarray*}
\end{theorem}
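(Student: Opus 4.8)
The plan is to recognize Theorem~\ref{thm:sampling} as a uniform convergence (relative-approximation) statement for the range space induced by the sublevel sets $B(f,r)$, and to reduce it to the classical Glivenko--Cantelli bound for $[0,1]$-valued function classes of bounded combinatorial dimension. Since the theorem is a rephrasing of a published result, one option is simply to cite Theorem~6.9 of~\cite{feldman2011unified}; the self-contained route I would take is as follows.

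First I would exploit the form of the target bound $\epsilon r$ by normalizing and truncating. For each pair $(f,r)$ with $r>0$, define $g_{f,r}:P\to[0,1]$ by $g_{f,r}(p)=\frac{f(p)}{r}\,\mathbf{1}[f(p)\le r]$ (and $g_{f,r}\equiv 0$ when $r=0$); clearly $0\le g_{f,r}(p)\le 1$. Then
$$\frac{1}{|P|}\sum_{p\in P} g_{f,r}(p) = \frac{1}{r}\cdot\frac{\sum_{p\in P,\,f(p)\le r} f(p)}{|P|}, \qquad \frac{1}{|S|}\sum_{q\in S} g_{f,r}(q) = \frac{1}{r}\cdot\frac{\sum_{q\in S,\,f(q)\le r} f(q)}{|S|},$$
so the claimed inequality is, after dividing through by $r$, exactly the statement that the empirical and population means of $g_{f,r}$ differ by at most $\epsilon$, uniformly over the class $\mathcal{G}=\{g_{f,r}: f\in F,\ r\ge 0\}$.

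Next I would bound the combinatorial complexity of $\mathcal{G}$ by $O(\dim(F,P))$. The key observation is that for any threshold $s\in(0,1)$ the sublevel set $\{p: g_{f,r}(p)\le s\}$ equals $(P\setminus B(f,r))\cup B(f,sr)$, a Boolean combination of only two sets from the family $\{B(f,\cdot)\}$. By the definition of $\dim(F,P)$, this family induces on any finite $G\subseteq P$ at most $|G|^{\dim(F,P)}$ distinct traces, so by a Sauer--Shelah / product argument the sublevel sets of $\mathcal{G}$ induce at most $|G|^{O(\dim(F,P))}$ traces; hence the dimension (equivalently pseudo-dimension) of $\mathcal{G}$ is $O(\dim(F,P))$. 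With this in hand, the classical uniform convergence bound for $[0,1]$-valued classes (Vapnik--Chervonenkis / Haussler / \cite{li2000improved}, as used in~\cite{feldman2011unified}) gives that a sample of size $|S|=\frac{c}{\epsilon^2}(\dim(F,P)+\log\frac1\delta)$ suffices for $\sup_{g\in\mathcal{G}}|\mathbf{E}_P g-\mathbf{E}_S g|\le\epsilon$ with probability at least $1-\delta$; rescaling by $r$ finishes the proof.

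The step I expect to be the main obstacle is controlling the complexity of $\mathcal{G}$ despite the extra continuous parameter $r$: a priori, ranging over all thresholds $r\ge 0$ on top of all $f\in F$ could inflate the dimension, and the bound $\epsilon r$ must hold simultaneously for every $r$ with a single, $r$-independent sample size. The truncation-and-rescaling definition of $g_{f,r}$ is precisely what absorbs the $r$-dependence into a $[0,1]$-valued class, and the two-threshold identity above is what shows that adding $r$ only multiplies the exponent in the growth function by a constant, leaving the dimension at $O(\dim(F,P))$. Verifying these two facts carefully --- rather than the final VC-type concentration step, which is standard --- is where the real work lies.
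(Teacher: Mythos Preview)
The paper does not prove Theorem~\ref{thm:sampling} at all: it is stated in the appendix purely as a citation of Theorem~6.9 in~\cite{feldman2011unified} and used as a black box to derive Lemma~\ref{lem:sampling}. So there is no ``paper's own proof'' to compare against; you correctly note that simply citing the source is an option, and that is exactly the option the paper takes.

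Your self-contained route is nonetheless sound and is essentially the standard argument behind such relative-approximation statements. The truncation-and-rescaling $g_{f,r}(p)=\frac{f(p)}{r}\mathbf{1}[f(p)\le r]$ is exactly the right move to convert the $\epsilon r$ bound into an additive $\epsilon$ bound for a $[0,1]$-valued class, and your identity $\{p:g_{f,r}(p)\le s\}=(P\setminus B(f,r))\cup B(f,sr)$ is correct for $s\in(0,1)$ (the boundary cases $s=0$ and $s\ge 1$ are trivial). Since this set is determined by the pair $(G\cap B(f,r),\,G\cap B(f,sr))$ and each factor ranges over at most $|G|^{\dim(F,P)}$ sets by the definition of $\dim(F,P)$, the growth function of $\mathcal{G}$ is at most $|G|^{2\dim(F,P)}$, giving $\dim(\mathcal{G},P)\le 2\dim(F,P)$; the constant $2$ is absorbed into $c$. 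The final step --- invoking the~\cite{li2000improved}-type uniform deviation bound for $[0,1]$-valued classes of bounded dimension --- is indeed the standard one and matches the machinery~\cite{feldman2011unified} relies on. What your sketch buys over the bare citation is transparency about \emph{why} the extra parameter $r$ does not blow up the sample complexity, which is precisely the point you flag as the main obstacle and resolve correctly.
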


\noindent
\textbf{Lemma~\ref{lem:sampling}} (Restated)\textbf{.}
\textit{
%\begin{lemma}\label{lem:sampling}
Fix a set $F$ of functions $f:P \to \R_{\geq 0}$.
Let $S$ be a sample drawn i.i.d. from $P$ according to $\{m_p: p\in P\}$, namely, for every $q \in S$ and every $p \in P$, we have $q=p$ with probability $\frac{m_p}{\sum_{z \in P}m_z}$.
Let $w_p=\frac {\sum_{z\in P}m_z}{m_p|S|}$ for $p \in P$.
For a sufficiently large $c$,
if $|S| \geq \frac c {\epsilon^2} \left(\dim(F,P) \log \dim(F,P) +\log \frac 1\delta\right)$
then with probability at least $1-\delta, \forall f \in F:$
$\left| \sum_{p\in P} f(p) - \sum_{q\in S} w_q f(q)\right| \leq \epsilon \left(\sum_{p\in P}m_p \right)\left(\max_{p\in P} \frac{f(p)}{m_p}\right).$
%\end{lemma}
}

\begin{proof}[Proof of Lemma~\ref{lem:sampling}]
Without loss of generality, assume $m_p \in \mathbf{N}^+$.
Define $G$ as follows: for each $p \in P$, include $m_p$ copies $\{p_i\}_{i=1}^{m_p}$ of $p$ in $G$
and define $f(p_i) = f(p)/m_p$.
Then $S$ is equivalent to a sample draw i.i.d.\ and uniformly at random from $G$.
We now apply Theorem~\ref{thm:sampling} on $G$ and $r = \max_{f \in F, p' \in G} f(p')$.
By Theorem~\ref{thm:sampling}, we know that for any $f \in F$,
\begin{eqnarray}
\left|\frac{\sum_{p' \in G} f(p')}{|G|} - \frac{\sum_{q' \in S} f(q')}{|S|}\right| \leq \epsilon \max_{p' \in G} f(p').\label{eqn:sampling}
\end{eqnarray}
%Choose $r = \max_{ g \in G} g(\x)$, we have
%$$\sum_{g \in G, g(\x) \leq r} g(\x) = \sum_{g \in G} g(\x) = \sum_{p \in P} h_p(\x),$$
%$$\sum_{g \in S, g(\x) \leq r} g(\x) = \sum_{g \in S} g(\x) = \sum_{p \in S} h_p(\x) / m_p.$$
The lemma then follows from multiplying both sides of~(\ref{eqn:sampling}) by $|G| = \sum_{p \in P} m_p$.
Also note that the dimension $\dim(F,G)$ is the same as that of $\dim(F,P)$ as pointed out by~\cite{feldman2011unified}.
\end{proof}

\begin{lemma}\label{lem:techBound}
If $d(p,b_p)^2/\epsilon \leq |d(p,\x)^2 - d(b_p,\x)^2|$, then
$$|d(p,\x)^2 - d(b_p,\x)^2| \leq 8\epsilon \min\{d(p,\x)^2, d(b_p,\x)^2\}.$$
\end{lemma}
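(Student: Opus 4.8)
The plan is to reduce everything to the one-dimensional relationship among the three Euclidean distances $a := d(p,\x)$, $b := d(b_p,\x)$, and $\gamma := d(p,b_p)$, and then to exploit the factorization $|a^2-b^2| = |a-b|\,(a+b)$. First I would record the single geometric fact that drives the argument: because $d(\cdot,\x)$ is a minimum of distances to the centers in $\x$, the triangle inequality yields $|a-b| = |d(p,\x)-d(b_p,\x)| \le d(p,b_p) = \gamma$. This is the only place the geometry of $\R^d$ enters; everything after it is elementary algebra in $a$, $b$, and $\gamma$.

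Next I would feed the hypothesis into this factorization. The hypothesis states $\gamma^2/\epsilon \le |a^2-b^2|$, while the factorization together with $|a-b|\le\gamma$ gives $|a^2-b^2| = |a-b|(a+b) \le \gamma(a+b)$. Chaining these two inequalities produces $\gamma^2/\epsilon \le \gamma(a+b)$, hence $\gamma \le \epsilon(a+b)$, and therefore $|a-b| \le \gamma \le \epsilon(a+b)$. This is precisely the meaning of $p\notin G(\x)$: the displacement $\gamma$ between $p$ and its surrogate center $b_p$ is small compared with their common distance scale $a+b$ to $\x$, which forces $a$ and $b$ to be nearly equal.

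The third step converts ``$a$ and $b$ nearly equal'' into a bound of $(a+b)^2$ by $\min\{a^2,b^2\}$. Rewriting $|a-b| \le \epsilon(a+b)$ as $\max\{a,b\}-\min\{a,b\} \le \epsilon\bigl(\max\{a,b\}+\min\{a,b\}\bigr)$ and rearranging gives $\max\{a,b\}(1-\epsilon) \le \min\{a,b\}(1+\epsilon)$, so that $a+b \le \tfrac{2}{1-\epsilon}\min\{a,b\}$ and hence $(a+b)^2 \le \tfrac{4}{(1-\epsilon)^2}\min\{a^2,b^2\}$. Substituting this into $|a^2-b^2| \le \gamma(a+b) \le \epsilon(a+b)^2$ yields $|a^2-b^2| \le \tfrac{4\epsilon}{(1-\epsilon)^2}\min\{a^2,b^2\}$, which is at most $8\epsilon\min\{a^2,b^2\}$ as soon as $(1-\epsilon)^2 \ge \tfrac12$, i.e.\ for all sufficiently small $\epsilon$ (any $\epsilon \le 1/4$ comfortably suffices).

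The computation itself is routine; the only point that requires care is the numerical constant. The clean factor $8$ holds only once $\epsilon$ lies below a fixed threshold, so I expect the main (and minor) obstacle to be making the implicit smallness assumption on $\epsilon$ explicit and verifying that the slack between $\tfrac{4}{(1-\epsilon)^2}$ and $8$ is genuinely nonnegative in the regime of interest. Since $\epsilon$ is an accuracy parameter driven toward $0$ throughout the paper, this restriction is harmless.
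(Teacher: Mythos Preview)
Your argument is correct and shares its opening moves with the paper's proof: both use $|a-b|\le\gamma$ to obtain $|a^2-b^2|\le\gamma(a+b)$, combine this with the hypothesis to deduce $\gamma\le\epsilon(a+b)$, and hence $|a^2-b^2|\le\epsilon(a+b)^2$. The divergence is in how $(a+b)^2$ is controlled by a single squared distance. The paper expands $(a+b)^2$, replaces $b$ by $a+\gamma$ via the triangle inequality, and then feeds the hypothesis in a second time to get a self-referential bound $|a^2-b^2|\le 6\epsilon a^2 + 4\epsilon^2|a^2-b^2|$, which is solved to yield the factor $8\epsilon$; the roles of $a$ and $b$ are then swapped to get the $\min$. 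You instead convert $|a-b|\le\epsilon(a+b)$ directly into $a+b\le \tfrac{2}{1-\epsilon}\min\{a,b\}$, which handles both distances at once and avoids the second appeal to the hypothesis. Your route is slightly more streamlined and makes the source of the ``sufficiently small $\epsilon$'' restriction (here $(1-\epsilon)^2\ge\tfrac12$) explicit; the paper's version hides the analogous restriction in the step $\tfrac{6\epsilon}{1-4\epsilon^2}\le 8\epsilon$.
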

\begin{proof}
We first have by triangle inequality
\begin{eqnarray*}
|d(p,\x)^2 - d(b_p,\x)^2| \leq  d(p,b_p)[d(p,\x) + d(b_p,\x)].
\end{eqnarray*}
Then by $d(p,b_p)^2/\epsilon \leq |d(p,\x)^2 - d(b_p,\x)^2|$,
\begin{eqnarray*}
d(p,b_p) \leq  \epsilon[d(p,\x) + d(b_p,\x)].
\end{eqnarray*}
Therefore, we have
\begin{eqnarray*}
|d(p,\x)^2 - d(b_p,\x)^2| & \leq  & d(p,b_p)[d(p,\x) + d(b_p,\x)] \leq \epsilon[d(p,\x) + d(b_p,\x)]^2 \\
& \leq & 2\epsilon[d(p,\x)^2 + d(b_p,\x)^2] \leq 2\epsilon[d(p,\x)^2 + (d(p,\x) + d(p,b_p))^2] \\
& \leq & 2\epsilon[d(p,\x)^2 + 2d(p,\x)^2 + 2d(p,b_p)^2] \leq 6\epsilon d(p,\x)^2 + 4\epsilon d(p,b_p)^2 \\
& \leq & 6\epsilon d(p,\x)^2 + 4\epsilon^2 |d(p,\x)^2 - d(b_p,\x)^2|
\end{eqnarray*}
for sufficiently small $\epsilon$. Then
\begin{eqnarray*}
|d(p,\x)^2 - d(b_p,\x)^2| & \leq & \frac{6\epsilon}{1-4\epsilon^2} d(p,\x)^2  \leq 8\epsilon d(p,\x)^2.
\end{eqnarray*}
Similarly, $|d(p,\x)^2 - d(b_p,\x)^2| \leq 8\epsilon d(b_p,\x)^2$.
The lemma follows from the last two inequalities.
\end{proof}

\begin{lemma}[Corollary 15.4 in~\cite{feldman2011unified}]\label{lem:weight}
Let $0 < \delta <1/2$, and
$t \geq c |B| \log\frac{|B|}{\delta}$
for a sufficiently large $c$.
Then with probability at least $1-\delta$,
$\forall b\in B_i, \sum_{q \in P_b \cap S} w_q \leq 2 |P_b|.$
\end{lemma}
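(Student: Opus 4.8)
The plan is to prove a uniform (over all $b \in B$) upper-tail bound on the Horvitz--Thompson estimator $\sum_{q \in P_b \cap S} w_q$, whose mean is exactly $|P_b|$, and then take a union bound over clusters. First I would record the expectation computation: since $P_b \subseteq P_i$ for the node $i$ defining $b$, only the $t_i$ draws forming $S_i$ can land in $P_b$, and a single such draw hits $p \in P_b$ with probability $m_p/\sum_{z \in P_i} m_z$ while carrying weight $w_p = \frac{\sum_i\sum_{z\in P_i}m_z}{t\, m_p}$. Multiplying the probability by the weight cancels $m_p$, and summing over the $t_i$ draws gives $\mathbf{E}\bigl[\sum_{q\in P_b\cap S}w_q\bigr]=|P_b|$, as already noted in the main text. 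Equivalently, writing $N_p$ for the number of times $p$ is drawn, the estimator is $\sum_{p\in P_b} N_p/\mathbf{E}[N_p]$, a sum of terms each of mean $1$, and a short calculation using $t_i=\frac{t\,\cost(P_i,B_i)}{\sum_j\cost(P_j,B_j)}$ shows the clean identity $w_p\cdot\mathbf{E}[N_p]=1$.

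Next I would regard $\sum_{q\in P_b\cap S}w_q$ as a sum of $t_i$ i.i.d.\ nonnegative per-draw contributions with mean $|P_b|/t_i$, and aim to show it is at most $2|P_b|$ with probability at least $1-\delta/|B|$. The hypothesis $t\ge c|B|\log\frac{|B|}{\delta}$ is calibrated exactly so that, after a union bound over the at most $|B|=nk$ clusters $\{P_b\}_{b\in B}$, the total failure probability is at most $\delta$; the real work is the per-cluster bound.

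The hard part will be that the individual weights $w_q=\frac{\sum_i\sum_{z\in P_i}m_z}{t\,m_q}$ are not uniformly bounded: a point with very small $m_q$ receives a huge weight, so its contribution has no finite moment generating function and a naive Chernoff/Bernstein bound fails outright. The structural fact I would exploit to get around this is precisely $w_p\cdot\mathbf{E}[N_p]=1$: a point of large weight $w_p$ is drawn at all with probability at most $\mathbf{E}[N_p]=1/w_p$ by Markov, so large-weight (hence rarely sampled) points contribute only $O(|P_b|)$ in expectation. Accordingly I would split $P_b$ into heavy points ($\mathbf{E}[N_p]\ge 1$, hence $w_p\le 1$), whose contributions are bounded and whose sampled counts concentrate by a standard multiplicative Chernoff argument, and light points ($w_p>1$), handled by truncating at an appropriately chosen threshold and bounding the probability that any over-threshold point is ever drawn. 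This separation into heavy and light points, together with the sample-size condition, is exactly the content of Corollary~15.4 of~\cite{feldman2011unified}, which I would invoke to discharge the per-cluster tail bound; the final statement then follows by the union bound described above.
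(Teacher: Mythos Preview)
The paper does not prove this lemma at all: it is stated in the appendix purely as a citation of Corollary~15.4 in~\cite{feldman2011unified}, with no accompanying argument. Your proposal likewise terminates by invoking that same corollary, so at the level of what is actually established the two agree exactly.

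The expectation computation and the identity $w_p\cdot\mathbf{E}[N_p]=1$ that you record are correct and are useful context, but none of this appears in the paper; the heavy/light decomposition you sketch is a plausible route toward reproving the Feldman--Langberg result but is not carried out here, and the paper makes no attempt in that direction either. In short: both you and the paper simply cite the result, and your additional exposition is extra rather than different.
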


\section{Complete Experimental Results}\label{app:exp}

Here we present the results of all the data sets over different network topologies and data partition methods.

Figure~\ref{fig:cost_graph_rand} shows the results of all the data sets on random graphs.
%We first observe that in practice the algorithms perform well with much smaller coreset sizes than predicted by the theoretical bounds.
%For example, to get $1.1$ approximation ratio, the coreset size and thus the communication needed is only $0.1\%-1\%$ of the theoretical bound.
%In fact, this is also observed in all the other experiments.
The first column of Figure~\ref{fig:cost_graph_rand} shows that our algorithm and COMBINE perform nearly the same in the uniform data partition.
This is not surprising since our algorithm reduces to the COMBINE algorithm when each local site has the same cost
and the two algorithms use the same amount of communication.
In this case, since in our algorithm the sizes of the local samples are proportional to the costs of the local solutions,
it samples the same number of points from each local data set.
This is equivalent to the COMBINE algorithm with the same amount of communication.
In the similarity-based partition, similar results are observed as this partition method also leads to balanced local costs.
However, in the weighted partition where local sites have significantly different contributions to the total cost,
our algorithm outperforms COMBINE.
It improves the $k$-means cost by $2\%-5\%$, and thus saves $10\%-30\%$ communication cost to achieve the same approximation ratio.
%The costs of our solutions consistently improve over the costs of COMBINE by $2\%-5\%$.
%Our algorithm then saves $10\%-30\%$ communication cost to achieve the same approximation ratio.

Figure~\ref{fig:cost_graph_grid} shows the results of all the data sets on grid and preferential graphs.
Similar to the results on random graphs, our algorithm performs nearly the same as COMBINE in the similarity-based partition
and outperforms COMBINE in the weighted partition and degree-based partition.
Furthermore, Figure~\ref{fig:cost_graph_rand} and~\ref{fig:cost_graph_grid} also show that
the performance of our algorithm merely changes over different network topologies and partition methods.

%Figure~\ref{fig:cost_tree_rand} shows the results of all the data sets on the spanning trees of the random graphs
%and Figure~\ref{fig:cost_tree_grid} shows those on the spanning trees of the grid and preferential graphs.
%Our algorithm outperforms COMBINE when the local sites have significantly different costs, similar to the pattern observed on graphs.
%Compared to the algorithm of Zhang~et~al., our algorithm consistently shows much better performance on all the data sets in different settings.
%It improves the $k$-means cost by $10\%-30\%$, and thus can achieve even better approximation ratio with only $10\%$ communication cost.
%This is because the algorithm of Zhang~et~al. constructs coresets from component coresets and needs larger coresets to prevent the accumulation of errors.
%%This is due to the fact that their algorithm needs larger coresets to prevent the accumulation of errors when constructing coresets from component coresets, and thus needs higher communication costs to achieve the same approximation ratio.
%Figure~\ref{fig:cost_tree_rand} also shows that although their costs decrease with the increase of the communication,
%the decrease is slower on larger graphs (e.g., as in the experiments for YearPredictionMSD).
%This is due to the fact that the spanning tree of a larger graph has larger height, leading to more accumulation of errors.
%In this case, more communication is needed to prevent the accumulation.

Figure~\ref{fig:cost_tree_rand} shows the results of all the data sets on the spanning trees of the random graphs
and Figure~\ref{fig:cost_tree_grid} shows those on the spanning trees of the grid and preferential graphs.
Compared to the algorithm of Zhang~et~al., our algorithm consistently shows much better performance on all the data sets in different settings.
It improves the $k$-means cost by $10\%-30\%$, and thus can achieve even better approximation ratio with only $10\%$ communication cost.
This is because the algorithm of Zhang~et~al. constructs coresets from component coresets and needs larger coresets to prevent the accumulation of errors.
%This is due to the fact that their algorithm needs larger coresets to prevent the accumulation of errors when constructing coresets from component coresets, and thus needs higher communication costs to achieve the same approximation ratio.
Figure~\ref{fig:cost_tree_rand} also shows that although their costs decrease with the increase of the communication,
the decrease is slower on larger graphs (e.g., as in the experiments for YearPredictionMSD).
This is due to the fact that the spanning tree of a larger graph has larger height, leading to more accumulation of errors.
In this case, more communication is needed to prevent the accumulation.

\newcommand{\figScale}{0.45}
\begin{figure*}[!p]
\centering
\begin{tabular}{ccccc}
\quad \footnotesize random graph & \qquad\qquad &\quad  \footnotesize random graph  & \quad \qquad\ & \footnotesize random graph\\
 \quad \footnotesize uniform partition &  & \quad \footnotesize similarity-based   partition & &  \footnotesize weighted partition
\end{tabular}
\begin{minipage}[t]{\textwidth}
\centering
\includegraphics[scale = \figScale]{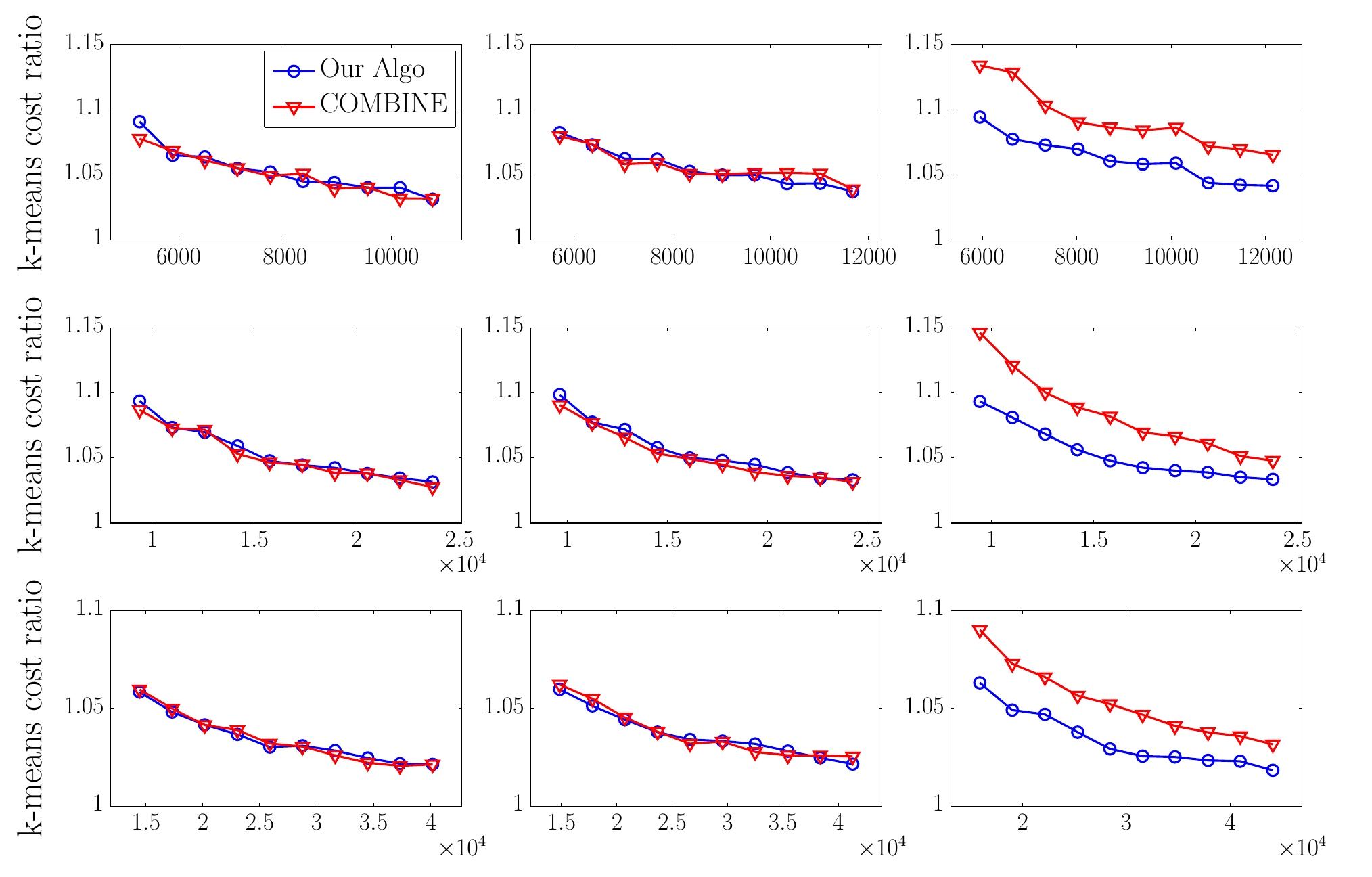}
\end{minipage}
\vspace*{-.3in}
\begin{minipage}[t]{\textwidth}
\centering
\includegraphics[scale = \figScale]{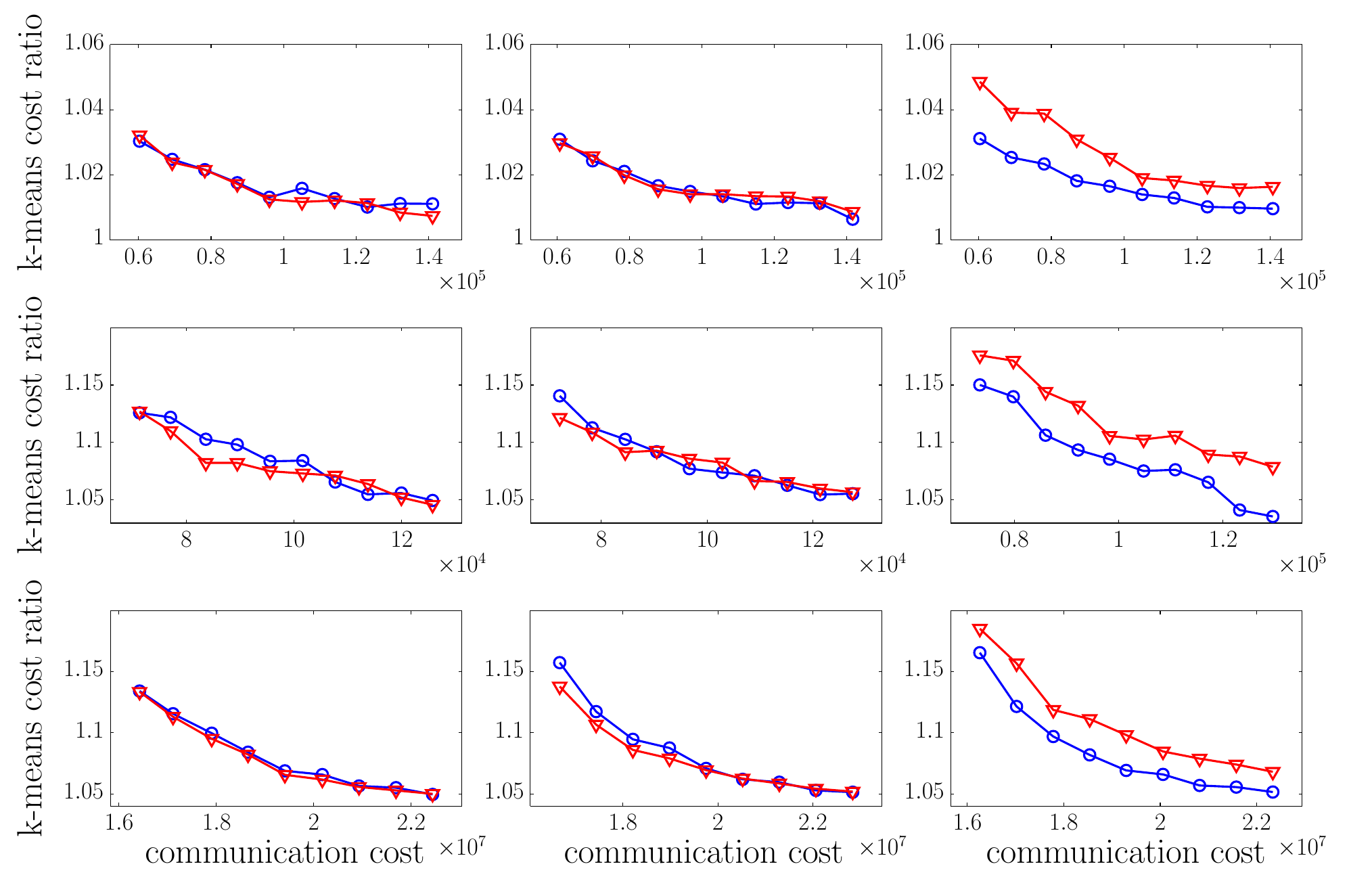}
\caption{$k$-means cost on random graphs. Columns: random graph with uniform partition, random graph with similarity-based partition, and random graph with weighted partition. Rows: Spam, Pendigits, Letter, synthetic, ColorHistogram, and YearPredictionMSD.}\label{fig:cost_graph_rand}
\end{minipage}
\end{figure*}

\begin{figure*}[!p]
\centering
\begin{tabular}{ccccc}
\quad\footnotesize grid graph & \qquad\qquad &\quad  \footnotesize grid graph  & \qquad \qquad\ & \footnotesize preferential graph\\
 \quad \footnotesize similarity-based partition &  & \quad \footnotesize  weighted partition & &  \footnotesize degree-based partition
\end{tabular}
\begin{minipage}[t]{\textwidth}
\centering
\includegraphics[scale = \figScale]{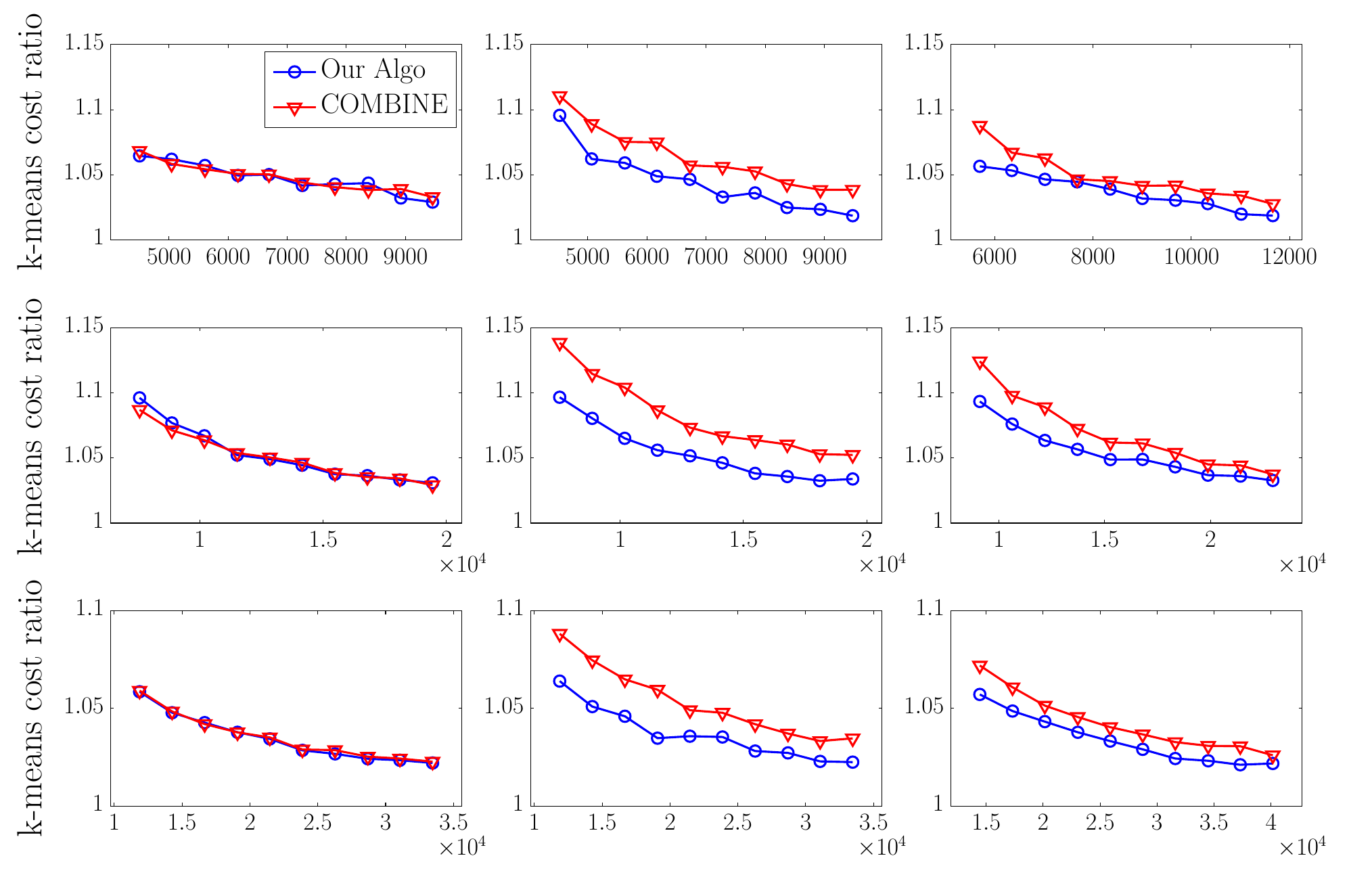}
\end{minipage}
\vspace*{-.3in}
\begin{minipage}[t]{\textwidth}
\centering
\includegraphics[scale = \figScale]{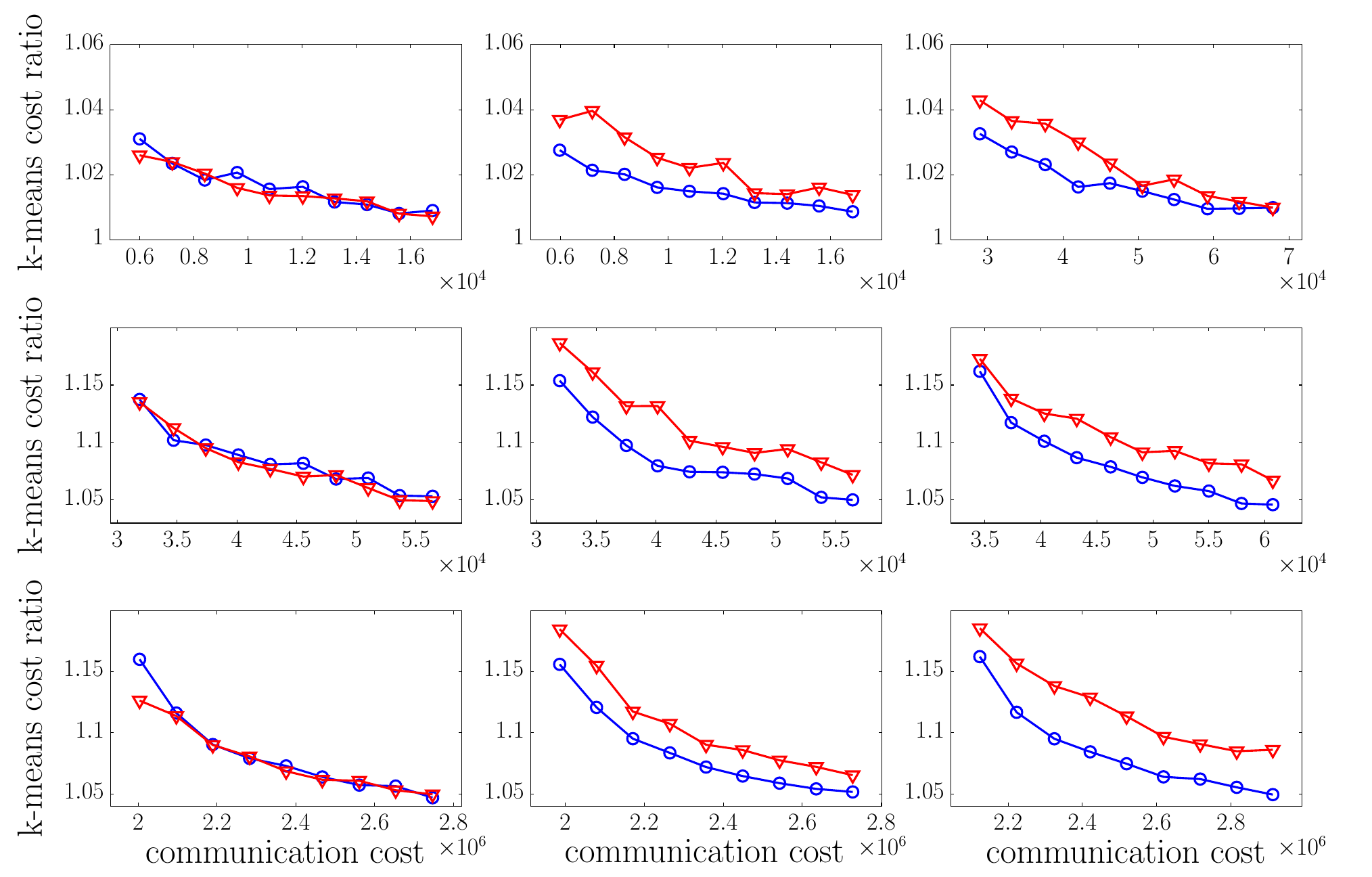}
\caption{$k$-means cost on grid and preferential graphs. Columns: grid graph with similarity-based partition, grid graph with weighted partition, and preferential graph with degree-based partition. Rows: Spam, Pendigits, Letter, synthetic, ColorHistogram, and YearPredictionMSD.}\label{fig:cost_graph_grid}
\end{minipage}
\end{figure*}

\begin{figure*}[!p]
\centering
\begin{tabular}{ccccc}
\quad \footnotesize spanning tree of random graph & & \footnotesize spanning tree of  random graph  & & \footnotesize spanning tree of random graph\\
\quad \footnotesize  uniform partition &   &  \footnotesize similarity-based partition &  &  \footnotesize weighted partition
\end{tabular}
\begin{minipage}[t]{\textwidth}
\centering
\includegraphics[scale = \figScale]{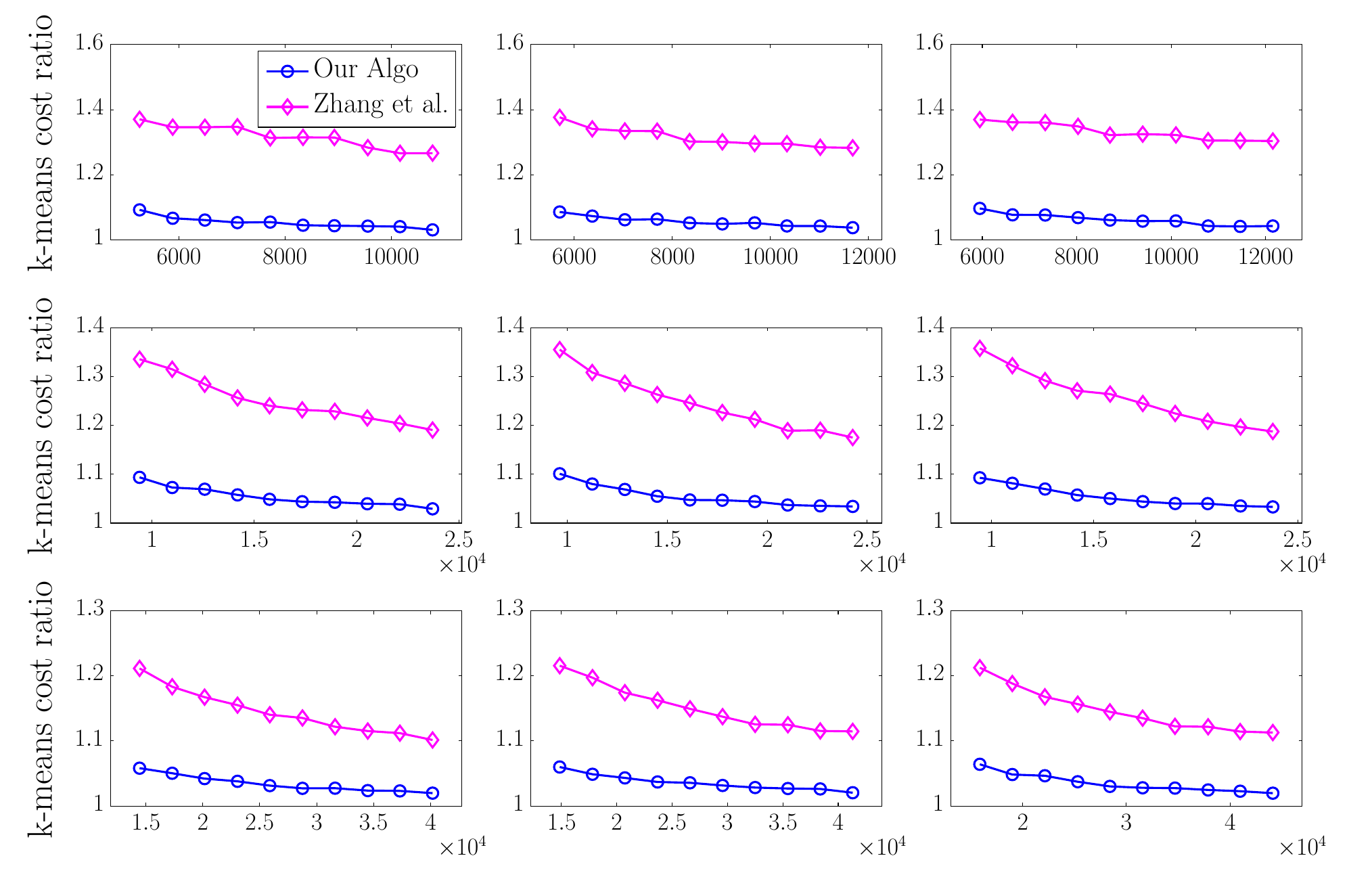}
\end{minipage}
\vspace*{-.3in}
\begin{minipage}[t]{\textwidth}
\centering
\includegraphics[scale = \figScale]{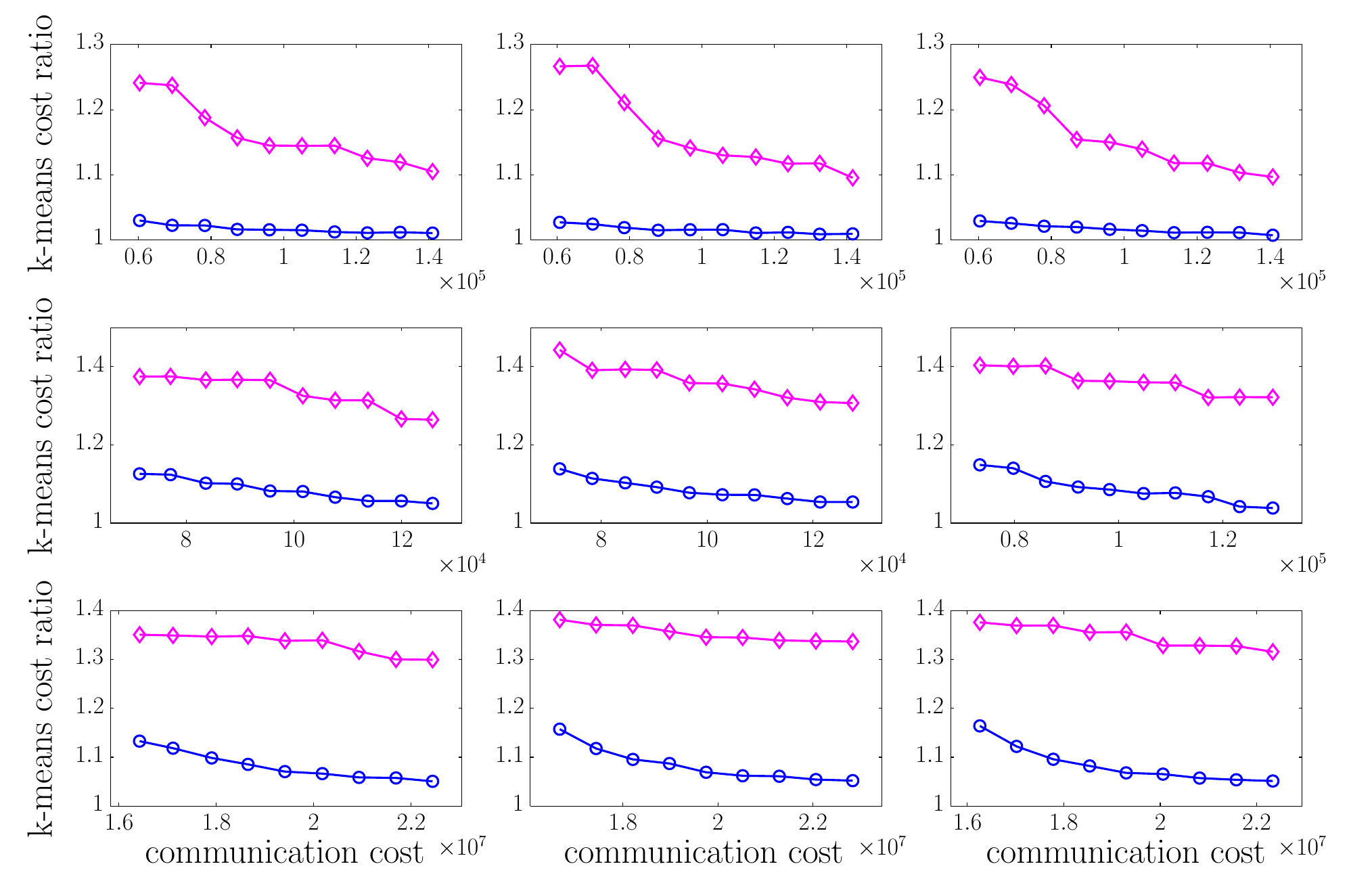}
\caption{$k$-means cost on the spanning trees of the random graphs. Columns: random graph with uniform partition, random graph with similarity-based partition, and random graph with weighted partition. Rows: Spam, Pendigits, Letter, synthetic, ColorHistogram, and YearPredictionMSD.}\label{fig:cost_tree_rand}
\end{minipage}
\end{figure*}

\begin{figure*}[!p]
\centering
\begin{tabular}{ccccc}
\qquad \footnotesize spanning tree of grid graph & & \quad \footnotesize spanning tree of  grid graph  & & \footnotesize spanning tree of preferential graph\\
\qquad \footnotesize  similarity-based partition &   &  \quad \footnotesize  weighted partition &  &  \footnotesize degree-based partition
\end{tabular}
\begin{minipage}[t]{\textwidth}
\centering
\includegraphics[scale = \figScale]{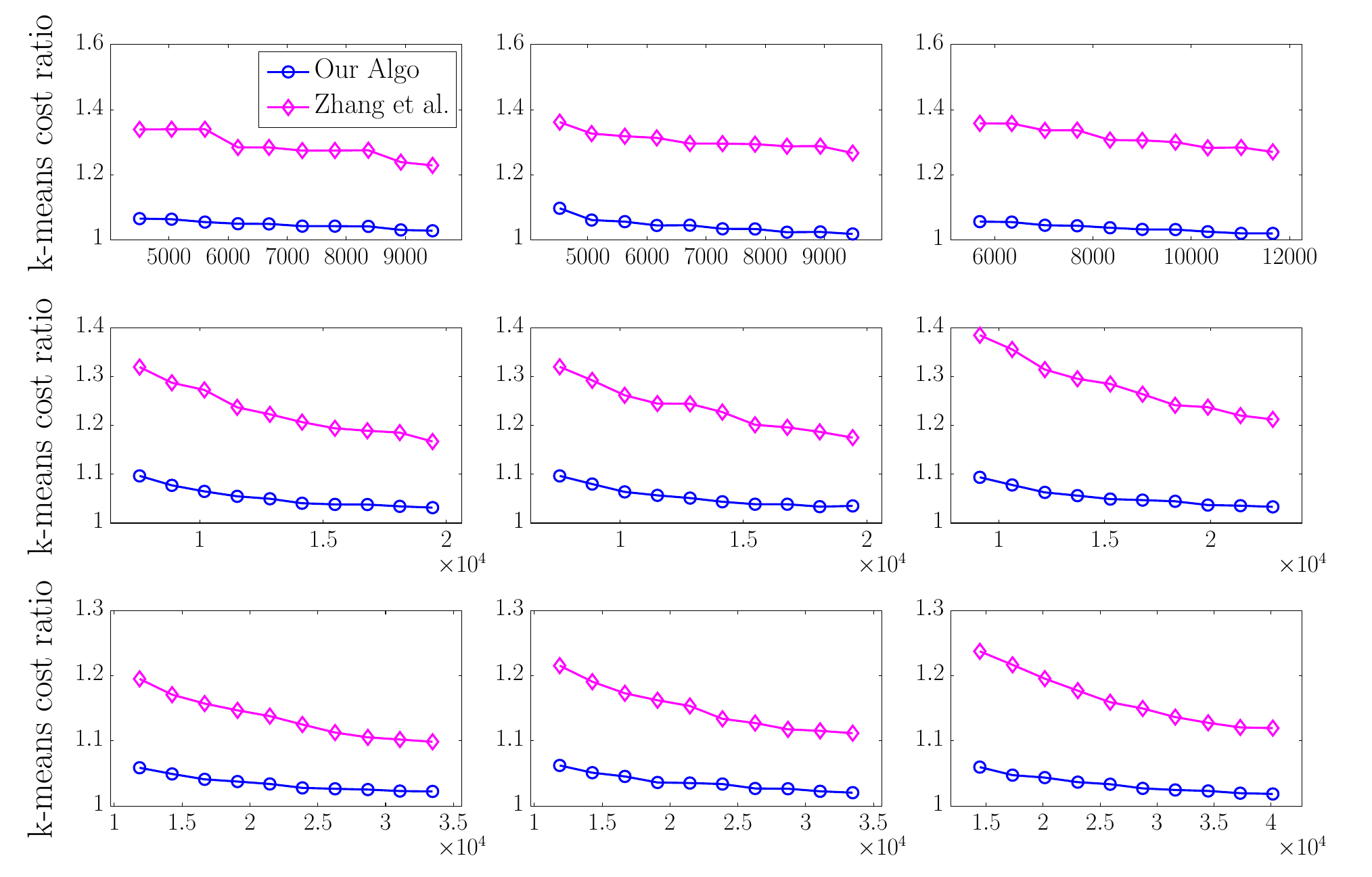}
\end{minipage}
\vspace*{-.3in}
\begin{minipage}[t]{\textwidth}
\centering
\includegraphics[scale = \figScale]{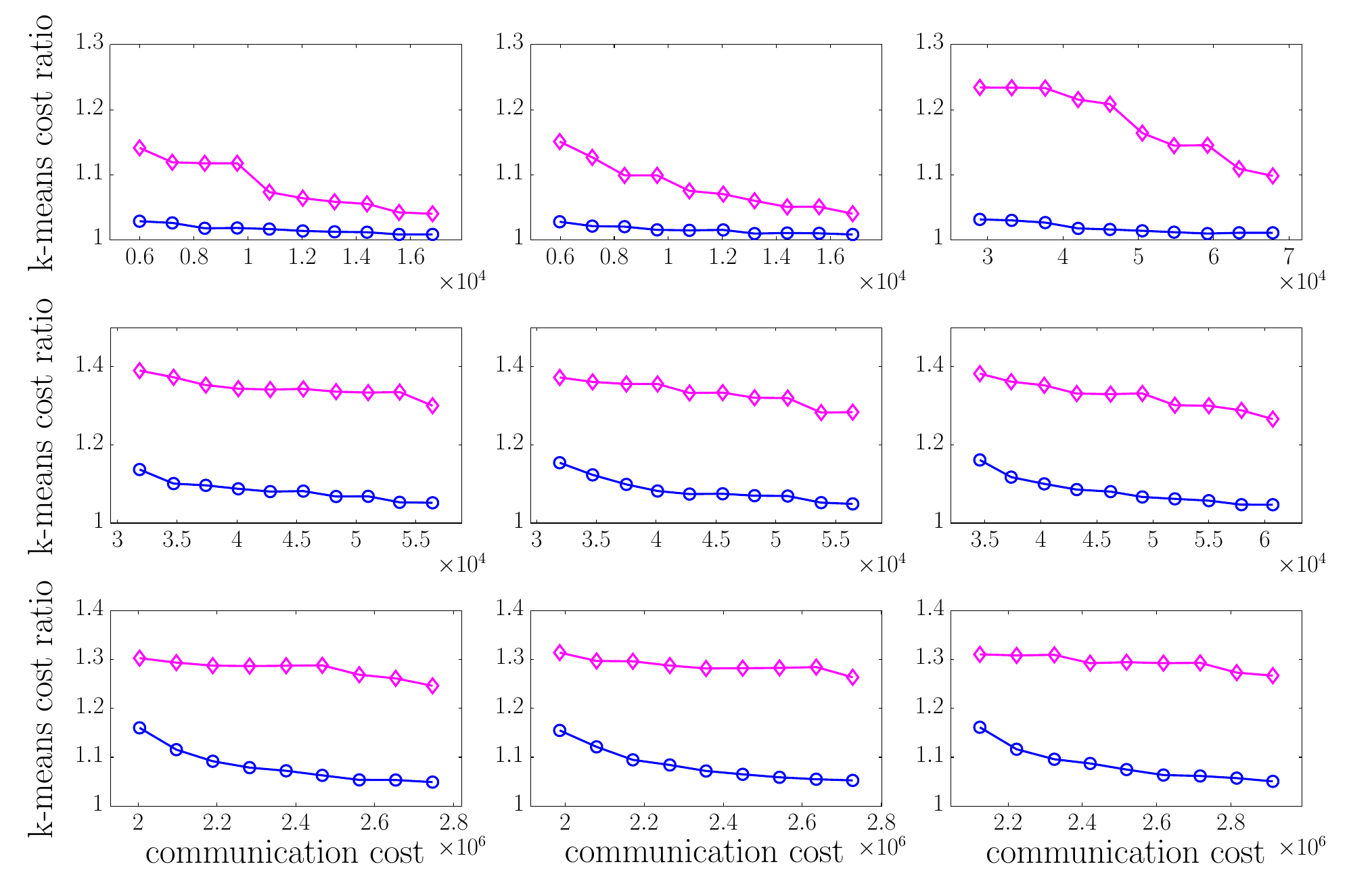}
\caption{$k$-means cost on the spanning trees of the grid and preferential graphs. Columns: grid graph with similarity-based partition, grid graph with weighted partition, and preferential graph with degree-based partition. Rows: Spam, Pendigits, Letter, synthetic, ColorHistogram, and YearPredictionMSD.}\label{fig:cost_tree_grid}
\end{minipage}
\end{figure*}

\end{document}